\newcommand\numbereq{\addtocounter{equation}{1}\tag{\theequation}}
\def\gB{{\mathcal{B}}}
\def\gD{{\mathcal{D}}}
\def\gK{{\mathcal{K}}}
\def\gL{{\mathcal{L}}}
\def\gO{{\mathcal{O}}}
\def\sR{{\mathbb{R}}}
\newcommand{\E}{\mathbb{E}}
\newcommand{\alg}{\textbf{\texttt{HOSL}}}
\pgfplotsset{compat=1.18}
\def\BibTeX{{\rm B\kern-.05em{\sc i\kern-.025em b}\kern-.08em
    T\kern-.1667em\lower.7ex\hbox{E}\kern-.125emX}}
\begin{document}
\newtheorem{thm}{Theorem}[section]
\newtheorem{cor}[thm]{Corollary}
\newtheorem{lem}[thm]{Lemma}
\newtheorem{claim}{Claim}
\newtheorem{prop}[thm]{Proposition}
\newtheorem{ex}{Example}[section]
\theoremstyle{definition}
\newtheorem{defn}[thm]{Definition}
\newtheorem{assum}[thm]{Assumption}
\newtheorem{finalremark}{Final Remark}[section]
\newtheorem{rem}{Remark}
\newtheorem{sol}{Solution}

\title{HOSL: Hybrid-Order Split Learning for Memory-Constrained Edge Training
}
\author{
\IEEEauthorblockN{
Aakriti Lnu\textsuperscript{1,*},
Zhe Li\textsuperscript{1,*},
Dandan Liang\textsuperscript{1,*},
Chao Huang\textsuperscript{2},
Rui Li\textsuperscript{1},
Haibo Yang\textsuperscript{1}
\thanks{*The first three authors contributed equally.}
}
\IEEEauthorblockA{
\textsuperscript{1}Dept. of Computing and Information Sciences Ph.D., Rochester Institute of Technology \\
\textsuperscript{2}School of Computing, Montclair State University
}
}

\maketitle

\begin{abstract}
    Split learning (SL) enables collaborative training of large language models (LLMs) between resource-constrained edge devices and compute-rich servers by partitioning model computation across the network boundary. However, existing SL systems predominantly rely on first-order (FO) optimization, which requires clients to store intermediate quantities such as activations for backpropagation. This results in substantial memory overhead, largely negating benefits of model partitioning. In contrast, zeroth-order (ZO) optimization eliminates backpropagation and significantly reduces memory usage, but often suffers from slow convergence and degraded performance.
    In this work, we propose {\alg}, a novel \underline{H}ybrid-\underline{O}rder \underline{S}plit \underline{L}earning framework that addresses this fundamental trade-off between memory efficiency and optimization effectiveness by strategically integrating ZO optimization on the client side with FO optimization on the server side. By employing memory-efficient ZO gradient estimation at the client, {\alg} eliminates backpropagation and activation storage, reducing client memory consumption. Meanwhile, server-side FO optimization ensures fast convergence and competitive performance.
    Theoretically, we show that {\alg} achieves a $\mathcal{O}(\sqrt{d_c/TQ})$ rate, which depends on client-side model dimension $d_c$ rather than the full model dimension $d$, demonstrating that convergence improves as more computation is offloaded to the server.
    Extensive experiments on OPT models (125M and 1.3B parameters) across 6 tasks demonstrate that {\alg} reduces client GPU memory by up to 3.7$\times$ compared to the FO method while achieving accuracy within 0.20\%-4.23\% of this baseline. 
    Furthermore, {\alg} outperforms the ZO baseline by up to 15.55\%, validating the effectiveness of our hybrid strategy for memory-efficient training on edge devices.

\end{abstract}

\begin{IEEEkeywords}
Split Learning, Zeroth-Order Optimization, Hybrid-Order Optimizer, Memory Efficiency. 
\end{IEEEkeywords}

\section{Introduction}

Split Learning (SL) is a collaborative training paradigm designed to enable efficient model training under strict data privacy and resource constraints \cite{vepakomma2018split, poirot2019split}.
In a typical SL architecture, a deep neural network is partitioned across a client and a server: the client executes the initial layers on its local data, while the server processes the remaining layers and computes the training loss.
Only intermediate activations and their associated gradients are exchanged between the client and the server, ensuring that raw data remains local to the client.
By offloading most of the computation to the server, SL enables scalable training in edge-cloud environments where edge devices operate under stringent memory and compute constraints \cite{lin2024efficient}.
Hence, SL has recently emerged as a promising paradigm for deploying and fine-tuning deep learning models on the edge, including large language models (LLMs)~\cite{gu2025vflair}.

Despite these architectural advantages, most existing SL systems rely on first-order (FO) optimization methods \cite{thapa2022splitfed, han2024convergence, radovivc2025towards}, which fundamentally limit their applicability in memory-constrained environments.
Although the model is partitioned between the client and the server, FO-based training still requires the client to store intermediate activations and construct the full computational graph for backpropagation across the split boundary.
This requirement induces a substantial memory footprint that scales with the depth and width of the client-side subnetwork, largely offsetting the benefits of model partitioning.
This issue becomes particularly pronounced when training extremely large models such as LLMs, where even a small prefix of the network can generate massive intermediate representations.
As a result, memory emerges as the primary bottleneck on the client side, preventing FO-based SL from being deployed on realistic, memory-constrained edge platforms.

Zeroth-order (ZO) optimization offers a fundamentally different training paradigm that is particularly appealing in such memory-constrained environments \cite{spall2002multivariate, ghadimi2013stochastic, nesterov2017random}.
Unlike FO methods, ZO algorithms estimate descent directions using only function evaluations, thereby completely eliminating the need for backpropagation and the storage of computational graphs.
This property makes ZO methods inherently lightweight in memory, as they do not require retaining intermediate activations or gradient tensors.
Recent advances have further improved the practicality of ZO methods for large-scale models.
In particular, Malladi et al.~\cite{malladi2023fine} proposed MeZO, a memory-efficient ZO optimizer that enables fine-tuning without storing activations or gradients.
This approach has since been adopted in a growing body of work \cite{li2025achieving, liu2025sparse, liang2025towards, li2025reconciling}.
However, ZO methods introduces significant estimation noise, leading to slower convergence and degraded performance compared to FO methods.

These observations reveal a fundamental tension in SL system design: while FO methods deliver fast convergence, they impose prohibitive memory overhead on the client, whereas ZO methods offer significant memory savings at the cost of slower convergence.
This motivates the following question:

\begin{tcolorbox}[
  top=1pt,        
  bottom=1pt, 
  left=1.5pt, 
  right=1.5pt
]
\textit{Q: Can we design a split learning algorithm that preserves the memory efficiency of zeroth-order methods on the client, without sacrificing the convergence speed and final performance of first-order methods?}
\end{tcolorbox}

To address this challenge, we propose {\alg}, a hybrid-order optimization framework for SL that strategically integrates ZO optimization on the client side with FO optimization on the server side.
In our design, the client updates its local model parameters using ZO estimators, thereby avoiding backpropagation and eliminating the need to store intermediate activations or computational graphs.
Meanwhile, the server uses FO optimization to ensure rapid convergence and strong final performance.
This asymmetric optimization strategy preserves the modularity and privacy benefits of split learning, while directly addressing the client-side memory bottleneck.
As a result, our framework enables scalable and practical fine-tuning of LLMs in memory-constrained edge environments.

Our key contributions are summarized as follows: 
\begin{itemize}
    \item We propose {\alg}, a novel hybrid SL framework that strategically integrates ZO optimization on the client side with FO optimization on the server side. 
    More specifically, we employ in-place operation in ZO estimation at the client for significant memory saving and FO stochastic gradient descent (SGD) at the server to guarantee the overall rapid model convergence. 
    \item We provide a rigorous convergence analysis of {\alg} under nonconvex objectives with the rate of $\mathcal{O}(\sqrt{d_c/TQ})$, where $d_c$ is the client-side parameter dimension, $T$ is the number of iterations, and $Q$ is the number of perturbations. This rate depends only on the client-side model size rather than the full model dimension, highlighting the benefit of offloading more computation to the server.
    \item We validate the effectiveness of {\alg} through extensive fine-tuning experiments on OPT models spanning two scales (125M and 1.3B parameters) across six tasks. The results show that {\alg} reduces client-side memory consumption by up to 47\% compared to the fully FO-based baseline, while achieving competitive accuracy close to the FO baseline. 
\end{itemize}

\section{Related Work}


\textbf{Split Learning (SL).}
SL~\cite{vepakomma2018split} was originally proposed to address the key limitations of federated learning (FL)~\cite{mcmahan2017communication} by splitting a model into several segments deployed unevenly across clients and the server. 
In this paradigm, only a lightweight portion of the model resides on the client side, which significantly reduces client-side memory consumption while enhancing data privacy by keeping raw data local~\cite{poirot2019split}. 
With the rapid growth of LLMs, SL has gained renewed attention for enabling the training of large-scale models on resource-constrained edge devices through model splitting~\cite{lin2024efficient, gu2025vflair, liang2025towards}. To further reduce the computational burden on edge devices, recent studies have integrated low-rank adaptation (LoRA) into SL by freezing a large portion of model parameters, thereby achieving improved training efficiency with an acceptable trade-off in accuracy~\cite{lin2024splitlora}. 
However, despite its advantages in reducing model size and computation, this approach still incurs considerable memory overhead due to the need to store gradient information from previous iterations. 
Such overhead can significantly exacerbate memory pressure on resource-constrained edge devices. 
These limitations highlight the need for more memory-efficient training mechanisms tailored to LLM deployment in edge environments.

\textbf{Zeroth-Order (ZO) Optimization.} 
ZO Optimization refers to a class of gradient-free methods that estimate gradient by only function value differences without requiring explicit gradient computation \cite{spall2002multivariate, ghadimi2013stochastic, nesterov2017random}. 
Recently, MeZO \cite{malladi2023fine} showed the potential of ZO optimization for memory-efficient fine-tuning of LLM. 
By perturbing the model parameters individually, MeZO reduces memory consumption to the inference level, achieving up to 7$\times$ memory savings compared to standard fine tuning with Adam on OPT-1.3B model. 
Given its significant memory efficiency, MeZO-style ZO optimization has been successfully applied in a variety of settings, such as FL \cite{fang2022communication, pmlr-v235-qin24a, li2025achieving}, split FL \cite{liang2025towards}, and efficient LLM fine-tuning \cite{guo2025zerothorder, liu2025sparse}. 
Nevertheless, a fundamental limitation of most existing ZO-based works is their slow convergence speed, which degrades training efficiency over a long training run. 
In this work, we utilize the memory efficiency of ZO optimization on the client side while mitigating its slow convergence by incorporating FO optimization at the server side. 
\algnewcommand{\LineComment}[1]{\State \textcolor{gray}{\(\triangleright\) #1}}
\algnewcommand{\SideComment}[1]{\hfill \textcolor{gray}{\(\triangleright\) #1}}
\newcommand*\circled[1]{\tikz[baseline=(char.base)]{
    \node[shape=circle,draw,inner sep=1pt] (char) {#1};}}
\section{Method}

\begin{figure}
    \centering
    \includegraphics[width=1\columnwidth]{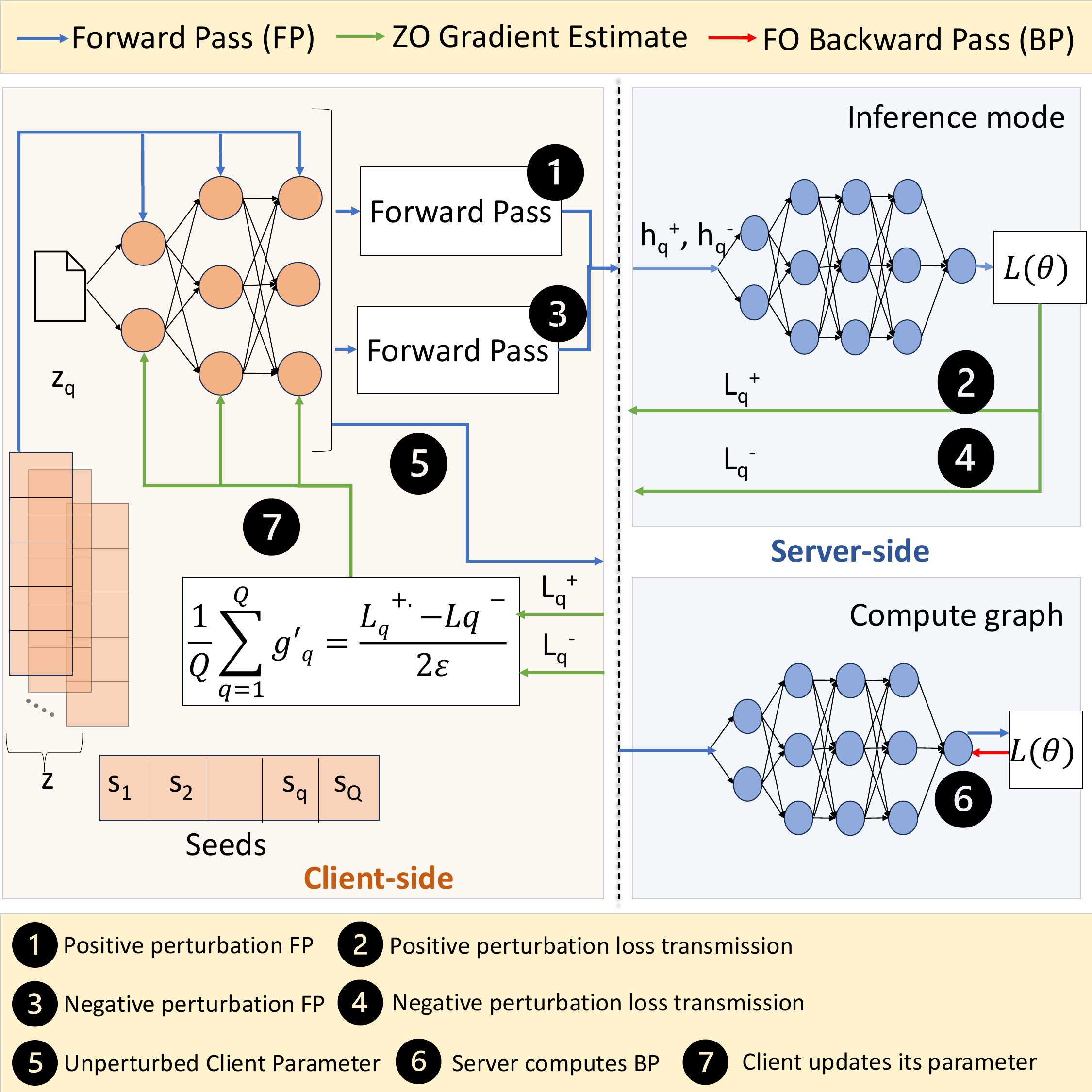}
    \caption{Overview of {\alg} Framework}
    \label{fig:hosl}
\end{figure}

\subsection{Split Learning}
We consider an SL system comprising a client and a server. The client holds a dataset $\mathcal{D} = \{\xi_i\}_{i=1}^{|\mathcal{D}|}$, where each sample $\xi_i\ = (x_i,y_i)$ consists of an input sequence $x_i$ and its label $y_i$. 
The entire model is partitioned at a designated cut layer \textit{k} into client-side $\textit{f}_c(\cdot;\theta_c)$ and server-side $\textit{f}_s(\cdot ;\theta_s)$ sub-models with parameters $\theta_c$ and $\theta_s$, respectively. We denote the complete model parameters as $\theta = \{\theta_c,\theta_s\}$.

The training objective is to minimize the expected loss over the training dataset $\mathcal{D}$:
\[
\min_{\theta_c, \theta_s} \; \mathcal{L}(\theta; \mathcal{D}) = \frac{1}{|\mathcal{D}|}\sum_{j \in \mathcal{D}}\ell(\textit{f}_s(\textit{f}_c(x_j;\theta_c);\theta_s),y_j). 
\]
During the forward pass, the client computes intermediate activations $\textit{h}_j = \textit{f}_c(x_j;\theta_c)$ and transmits it to the server. The server completes the forward pass to compute loss.

\subsection{Optimization Methods} Our framework supports  two optimization methods that can be independently selected by each party: First-Order (FO) optimization via backpropagation and Zeroth-Order (ZO) optimization via gradient estimation.

\subsubsection{First-Order Optimization}
In FO optimization, stochastic gradients are computed via backpropagation. 
Within the SL framework, gradient computation requires coordination between the client and the server.

\textbf{Definition 1 (First-Order Gradient in SL).} 
Given the loss $\mathcal{L}(\theta; \mathcal{B})$ on a batch of data $\mathcal{B} \subset \mathcal{D}$ computed at the server, the gradients are obtained as follows. The server computes its gradient directly:
\begin{equation}
\nabla_{\theta_s} \mathcal{L}(\theta; \mathcal{B}) = \frac{\partial \mathcal{L}}{\partial \theta_s}.
\label{eq:server_grad}
\end{equation}
The server also computes and transmits the gradient with respect to activations at the cut layer:
\begin{equation}
g_h = \nabla_h \mathcal{L} = \frac{\partial \mathcal{L}}{\partial h}.
\label{eq:activation_grad}
\end{equation}
The client then completes backpropagation through its sub-model by chain rule:
\begin{equation}
\nabla_{\theta_c} \mathcal{L}(\theta; \mathcal{B}) = \left(\frac{\partial h}{\partial \theta_c}\right)^\top g_h.
\label{eq:client_grad}
\end{equation}

\textbf{Definition 2 (First-Order Parameter Update).} Parameters are updated via stochastic gradient descent (SGD):
\begin{align}
\theta_c^{t+1} &= \theta_c^t - \eta_c \nabla_{\theta_c} \mathcal{L}(\theta^t; \mathcal{B}_t) \label{eq:fo_client_update}, \\
\theta_s^{t+1} &= \theta_s^t - \eta_s \nabla_{\theta_s} \mathcal{L}(\theta^t; \mathcal{B}_t),  \label{eq:fo_server_update}
\end{align}
where $\eta_c$ and $\eta_s$ denote the learning rates at the client and server side, respectively. 
FO optimization requires storing intermediate activations during the forward pass, incurring memory cost proportional to sequence length and batch size.

\subsubsection{Zeroth-Order Optimization}
ZO optimization estimates gradients using only forward passes and hence eliminates the need for backpropagation and the associated memory overhead of storing activations. 
We adopt the gradient estimator from MeZO.

\textbf{Definition 3 (Zeroth-Order Gradient Estimator).} Given parameters $\theta \in \mathbb{R}^d$, loss function $\mathcal{L}$, and mini-batch $\mathcal{B}$, the gradient estimate is:
\begin{equation}
\widetilde{\nabla} \mathcal{L}(\theta; \mathcal{B}) = \frac{\mathcal{L}(\theta + \epsilon z; \mathcal{B}) - \mathcal{L}(\theta - \epsilon z; \mathcal{B})}{2\epsilon} \cdot z,
\label{eq:spsa_gradient}
\end{equation}
where $z \in \mathbb{R}^d$ is a random perturbation vector with components $z_j {\sim} \mathcal{N}(0, 1)$ for $j = 1, \ldots, d$, and $\epsilon > 0$ is the perturbation scale.

%
For variance reduction, $Q$ independent perturbations can be averaged as follows:
\begin{equation}
\hat{g}_Q = \frac{1}{Q} \sum_{q=1}^{Q} \frac{\mathcal{L}(\theta + \epsilon z^{(q)}; \mathcal{B}) - \mathcal{L}(\theta - \epsilon z^{(q)}; \mathcal{B})}{2\epsilon} \cdot z^{(q)},
\label{eq:multi_pert_gradient}
\end{equation}
where $\{z^{(q)}\}_{q=1}^Q$ are independent samples, each with $z^{(q)} \in \mathbb{R}^d$ and $z^{(q)}_j {\sim} \mathcal{N}(0, 1)$ for all $j$ and $q$. This requires $2Q$ forward passes per update step.

\textbf{Definition 4 (Zeroth-Order SGD).} The ZO SGD update rule mirrors FO SGD but substitutes the estimated gradient:
\begin{equation}
\theta^{t+1} = \theta^t - \eta \widetilde{\nabla} \mathcal{L}(\theta^t; \mathcal{B}_t).
\label{eq:zo_update}
\end{equation}

\subsection{Proposed Algorithm: Mixed-Optimizer Split Learning}
We now present the training procedure for our {\alg}, where the client uses ZO optimization and the server uses FO SGD.

\subsubsection{Phase 0: Initialization} 
The server initializes the model and partitions it in a designated cut layer. 
The first $k$ layers form the client module, and the remaining layers form the server module. 
Both parties initialize their trainable parameters based on the fine-tuning method (full fine-tuning, LoRA, or prefix tuning).

\subsubsection{Phase 1: ZO Gradient Estimation} 
For each perturbation $q = 1, \ldots, Q$ (illustrated in Figure~\ref{fig:hosl}, steps \circled{1}--\circled{4}):
\paragraph{Step 1.1: Positive Perturbation Forward Pass} 
The client perturbs its parameters by $+\epsilon z_q$ and computes its forward pass (step \circled{1}), sending the resulting activations $h_q^+$ to the server. 
The server operates in \textit{inference mode}: it completes the forward pass and computes the loss $\mathcal{L}^+_q$ without computing or storing gradients, returning only the scalar loss to the client (step \circled{2}).

\paragraph{Step 1.2: Negative Perturbation Forward Pass} 

The client perturbs its parameters to $-\epsilon z_q$ from the original position (by applying $-2\epsilon z_q$ to the current state) and computes its forward pass (step \circled{3}), sending the activations $h_q^-$ to the server. 
The server again operates in inference mode, computing the loss $\mathcal{L}^-_q$ without backpropagation and returning the scalar loss (step \circled{4}). 
Crucially, the server parameters $\theta_s$ remain unchanged between $\mathcal{L}^+_q$ and $\mathcal{L}^-_q$, ensuring an unbiased ZO gradient estimate.
\paragraph{Step 1.3: Restore and Store Gradient} 
The client restores its parameters to the original state and computes the ZO gradient estimate $\hat{g}_q$ using Equation~\ref{eq:multi_pert_gradient}, as shown in the gradient computation box in Figure~\ref{fig:hosl}. 
The client \textit{stores} this gradient estimate along with the seed $s_q$ but does not update its parameters yet.
\subsubsection{Phase 2: Server Update} 
After completing all $Q$ perturbations, the client performs a forward pass with its \textit{original} (unmodified) parameters (step \circled{5}) and sends the activations to the server with a \texttt{compute\_grad} signal.
The server computes the forward pass with gradient tracking enabled, performs backpropagation to obtain $\nabla_{\theta_s}\mathcal{L}$, and updates its parameters using SGD (step \circled{6}).

\subsubsection{Phase 3: Client Update} 
After receiving acknowledgment from the server, the client applies all $Q$ stored ZO gradient estimates to update its parameters (step \circled{7}).
After $T$ training iterations, the fine-tuned model parameters on both the client and the server represent the trained model.
\begin{algorithm}[t]
\caption{Hybrid-Order Split Learning (Client-Side)}
\label{alg:client}
\begin{algorithmic}[1]
\Require Client parameters $\theta_c$, learning rate $\eta_c$, perturbation scale $\epsilon$, number of perturbations $Q$, training iterations $T$
\State \textbf{Initialize:} Connect to server
\For{$t = 1, 2, \ldots, T$}
    \State Sample mini-batch $(x, y)$ from local data
    \State Initialize gradient accumulator: $\{(\hat{g}_q, s_q)\}_{q=1}^Q \gets \emptyset$
    \State
    \LineComment{\textbf{Phase 1:} ZO gradient estimation with $Q$ perturbations}
    \For{$q = 1, 2, \ldots, Q$}
        \State Sample random seed $s_q$
        \State
        \LineComment{Step 1.1: Positive perturbation}
        \State $\theta_c \gets \textsc{Perturb}(\theta_c, +\epsilon, s_q)$
        \State $h^+_q \gets f_c(x; \theta_c)$
        \State Send $(h^+_q, y, \texttt{inference})$ to server
        \State Receive $\mathcal{L}^+_q$ from server
        \State
        \LineComment{Step 1.2: Negative perturbation}
        \State $\theta_c \gets \textsc{Perturb}(\theta_c, -2\epsilon, s_q)$ \SideComment{Now at $\theta_c - \epsilon z_q$}
        \State $h^-_q \gets f_c(x; \theta_c)$
        \State Send $(h^-_q, y, \texttt{inference})$ to server
        \State Receive $\mathcal{L}^-_q$ from server
        \State
        \LineComment{Step 1.3: Restore and compute gradient (do not update yet)}
        \State $\theta_c \gets \textsc{Perturb}(\theta_c, +\epsilon, s_q)$ \SideComment{Restore to original}
        \State $\hat{g}_q \gets \dfrac{\mathcal{L}^+_q - \mathcal{L}^-_q}{2\epsilon Q}$ \SideComment{Store projected gradient}
    \EndFor
    \State
    \LineComment{\textbf{Phase 2:} Server FO update (before client update)}
    \State $h \gets f_c(x; \theta_c)$ \SideComment{Forward with original $\theta_c$}
    \State Send $(h, y, \texttt{compute\_grad})$ to server
    \State Wait for server update acknowledgment
    \State
    \LineComment{\textbf{Phase 3:} Client ZO update (after server update)}
    \For{$q = 1, 2, \ldots, Q$}
        \State $\theta_c \gets \textsc{ZOUpdate}(\theta_c, \hat{g}_q, \eta_c, s_q)$
    \EndFor
\EndFor
\end{algorithmic}
\end{algorithm}

\begin{algorithm}[t]
\caption{Hybrid-Order Split Learning (Server-Side)}
\label{alg:server}
\begin{algorithmic}[1]
\Require Server parameters $\theta_s$, learning rate $\eta_s$
\State \textbf{Initialize:} Wait for client connection
\While{training in progress}
    \State Receive $(h, y, \texttt{phase})$ from client
    \State
    \If{$\texttt{phase} = \texttt{inference}$}
        \LineComment{\textbf{Phase 1:} Inference only during client ZO estimation}
        \State \textbf{with} inference mode: \SideComment{No gradient computation}
        \State \hspace{1em} $\hat{y} \gets f_s(h; \theta_s)$
        \State \hspace{1em} $\mathcal{L} \gets \textsc{Loss}(\hat{y}, y)$
        \State Send $\mathcal{L}$ to client
        \State \textcolor{gray}{\(\triangleright\) $\theta_s$ remains unchanged}
        \State
    \ElsIf{$\texttt{phase} = \texttt{compute\_grad}$}
        \LineComment{\textbf{Phase 2:} Compute gradients and update}
        \State $\hat{y} \gets f_s(h; \theta_s)$
        \State $\mathcal{L} \gets \textsc{Loss}(\hat{y}, y)$
        \State Compute $\nabla_{\theta_s} \mathcal{L}$ via backpropagation
        \State $\theta_s \gets \theta_s - \eta_s \cdot \nabla_{\theta_s} \mathcal{L}$ \SideComment{FO update}
        \State Send acknowledgment to client \SideComment{No loss needed}
    \EndIf
\EndWhile
\end{algorithmic}
\end{algorithm}

\begin{algorithm}[t]
\caption{Subroutines for Hybrid-Order Split Learning}
\label{alg:subroutines}
\begin{algorithmic}[1]
\Function{Perturb}{$\theta, \delta, s$}
    \LineComment{Perturb parameters by $\delta \cdot z$ where $z \sim \mathcal{N}(0, I)$}
    \State Set random seed to $s$
    \For{each parameter $\theta^{(i)} \in \theta$}
        \State $z^{(i)} \sim \mathcal{N}(0, I)$ \SideComment{Same shape as $\theta^{(i)}$}
        \State $\theta^{(i)} \gets \theta^{(i)} + \delta \cdot z^{(i)}$
    \EndFor
    \State \Return $\theta$
\EndFunction
\State
\Function{ZOUpdate}{$\theta, \hat{g}, \eta, s$}
    \LineComment{Update parameters using projected gradient and regenerated perturbation}
    \State Set random seed to $s$ \SideComment{Regenerate same $z$ as \textsc{Perturb}}
    \For{each parameter $\theta^{(i)} \in \theta$}
        \State $z^{(i)} \sim \mathcal{N}(0, I)$
        \State $\theta^{(i)} \gets \theta^{(i)} - \eta \cdot \hat{g} \cdot z^{(i)}$
    \EndFor
    \State \Return $\theta$
\EndFunction
\end{algorithmic}
\end{algorithm}

\section{Convergence Analysis}
In this section, we present a rigorous convergence analysis of our {\alg} algorithm. 
We first state our assumptions as follows.

\begin{assum}[$L$-Smooth]\label{main_assum:smooth}
    The loss function $\gL$ is $L$-smooth for $\forall \theta_1,\theta_2\in \sR^d$:
    \[\|\ \nabla \gL(\theta_1)- \nabla \gL(\theta_2)\|\le L\|\ \theta_1-\theta_2 \|\ ,\]
    thus
    \[ \gL(\theta_2) \le \gL(\theta_1) + \langle\nabla \gL(\theta_1),\theta_2-\theta_1\rangle +\frac{L}{2} \|\ \theta_2-\theta_1 \|^2. \]
\end{assum}
\begin{assum}[Unbiased Gradient and Bounded variance]\label{main_assum:variance}
    The variance of the stochastic gradients w.r.t. the client and server are upper bounded respectively. For $\forall \theta \in \sR^d$, let $\theta=[\theta_c, \theta_s]$, where $\theta_c\in \sR^{d_c}, \theta_s\in \sR^{d_s}$ represent the parameters on client and server, respectively. Then for any mini-batch $\gB \in \gD$:
    \[
    \|\nabla_{\theta_c} \gL(\theta;\gB)-\nabla_{\theta_c} \gL(\theta)\|^2\le \sigma_{c}^2,
    \]
    \[
    \|\nabla_{\theta_s} \gL(\theta;\gB)-\nabla_{\theta_s} \gL(\theta)\|^2\le \sigma_{s}^2.
    \]   
\end{assum}

These first two assumptions are standard in non-convex optimization~\cite{ghadimi2013stochastic} and split learning~\cite{liang2025towards}.

\subsection{Convergence Analysis for {\alg}}

In this subsection, we first analyze the convergence rate of our {\alg}, for which we have the following result:
\begin{thm}\label{thm:main_algorithm}
    Under Assumption \ref{main_assum:smooth} and \ref{main_assum:variance}. If the learning rates on client and server satisfy $\eta_s \leq \frac{3}{4L}, 
    \eta_c \leq \frac{Q}{4Ld_c}$, the sequence of iterates generated by our algorithm satisfies:
    \vspace{-1pt}
    \begin{align*}
    \textstyle\frac{1}{T}\sum_{t=0}^{T}\E[\|\nabla_{\theta}\gL(\theta^t)\|^2]
    \le \frac{4\mathcal{F}}{\eta T} +2L\eta\sigma_s^2+\frac{4\eta L d_c\sigma_{c}^2 }{Q}+\frac{L^2\lambda^2 d_c^3}{Q}
   \end{align*}

\end{thm}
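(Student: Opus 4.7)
The plan is to apply the standard descent-lemma template for nonconvex SGD, but with careful separation of the client-side ZO contribution from the server-side FO contribution, since they enter the per-iteration update with different bias/variance characteristics. Starting from Assumption \ref{main_assum:smooth} applied to $(\theta^t,\theta^{t+1})$, I would write
\begin{align*}
\gL(\theta^{t+1}) \le \gL(\theta^t) &+ \langle \nabla_{\theta_c}\gL(\theta^t),\, \theta_c^{t+1}-\theta_c^t\rangle \\
&+ \langle \nabla_{\theta_s}\gL(\theta^t),\, \theta_s^{t+1}-\theta_s^t\rangle + \tfrac{L}{2}\|\theta^{t+1}-\theta^t\|^2,
\end{align*}
and then substitute $\theta_s^{t+1}-\theta_s^t = -\eta_s \nabla_{\theta_s}\gL(\theta^t;\gB_t)$ and $\theta_c^{t+1}-\theta_c^t = -\eta_c \sum_{q=1}^Q \hat g_q z^{(q)}$. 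The $\|\theta^{t+1}-\theta^t\|^2$ term splits along the block partition, so the two sides can be handled almost independently up to a final aggregation.

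For the server block I would use the standard SGD argument: take conditional expectation, use the unbiasedness and the $\sigma_s^2$ bound from Assumption \ref{main_assum:variance}, and choose $\eta_s \le 3/(4L)$ so that the quadratic term $\tfrac{L\eta_s^2}{2}\E\|\nabla_{\theta_s}\gL(\theta^t;\gB_t)\|^2$ is absorbed into $-\eta_s\|\nabla_{\theta_s}\gL(\theta^t)\|^2$ with a $1/4$ constant remaining, leaving a residual noise term of order $L\eta_s^2 \sigma_s^2$.

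The client block is the main obstacle and the source of all the $d_c$ and $Q$ factors. I would establish two lemmas about the averaged two-point estimator $\hat g_Q = \tfrac{1}{Q}\sum_q \frac{\gL(\theta+\epsilon z^{(q)};\gB)-\gL(\theta-\epsilon z^{(q)};\gB)}{2\epsilon} z^{(q)}$ restricted to the client block. First, a bias lemma: by a Taylor expansion of $\gL$ around $\theta_c^t$ and $L$-smoothness, the mean of $\hat g_Q$ equals the gradient of an $\epsilon$-smoothed surrogate, giving $\|\E[\hat g_Q \mid \theta^t] - \nabla_{\theta_c}\gL(\theta^t)\| = O(\epsilon L d_c^{3/2})$; squaring produces the $L^2\epsilon^2 d_c^3$ term (which is what the $L^2\lambda^2 d_c^3/Q$ term in the statement encodes once the $Q$-averaging is accounted for). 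Second, a variance lemma: for a two-point Gaussian-smoothed estimator the second moment satisfies $\E\|\hat g_Q\|^2 \lesssim \tfrac{d_c}{Q}\|\nabla_{\theta_c}\gL(\theta^t)\|^2 + \tfrac{d_c}{Q}\sigma_c^2 + O(\epsilon^2 L^2 d_c^3)$; this is where the $d_c/Q$ blow-up appears. Plugging these into the inner-product term (via Young's inequality to move the bias into the squared-gradient slot) and into the quadratic term, the condition $\eta_c \le Q/(4Ld_c)$ is exactly what is needed so that the $\tfrac{L\eta_c^2}{2}\cdot \tfrac{d_c}{Q}\|\nabla_{\theta_c}\gL(\theta^t)\|^2$ contribution is absorbed by $-\eta_c \|\nabla_{\theta_c}\gL(\theta^t)\|^2$ with a constant-fraction surplus, leaving the residual noise term of order $\eta_c L d_c \sigma_c^2/Q$ and the bias term $L^2 \epsilon^2 d_c^3/Q$.

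Finally, I would take total expectation, rearrange to isolate $\E\|\nabla_\theta \gL(\theta^t)\|^2 = \E\|\nabla_{\theta_c}\gL(\theta^t)\|^2 + \E\|\nabla_{\theta_s}\gL(\theta^t)\|^2$, telescope over $t=0,\dots,T$, use $\gL(\theta^0)-\gL^\star \le \mathcal{F}$, and set $\eta_c=\eta_s=\eta$ to match the stated form. The hard part is genuinely the ZO variance/bias lemma: tracking the dimension dependence carefully enough to get $d_c$ rather than the ambient $d$, and confirming that the $Q$-averaging reduces variance linearly while leaving the bias unchanged, is where one must be most careful and is the step that justifies the advertised $\mathcal{O}(\sqrt{d_c/TQ})$ rate after optimizing $\eta$.
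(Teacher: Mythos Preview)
Your proposal is correct and follows essentially the same route as the paper: descent lemma split along the $(\theta_c,\theta_s)$ blocks, standard FO-SGD bound on the server side under $\eta_s\le 3/(4L)$, ZO bias/variance lemmas (the paper states these as Lemma~\ref{lem:zero} and its multi-perturbation corollary) to control the client terms under $\eta_c\le Q/(4Ld_c)$, then telescoping with $\eta_c=\eta_s=\eta$. The only cosmetic differences are that the paper handles the client inner-product term via the polarization identity $\langle a,-b\rangle=\tfrac12\|a\|^2+\tfrac12\|b\|^2-\tfrac12\|a+b\|^2$ (introducing $\nabla_{\theta_c}\gL_\lambda$ explicitly) rather than Young's inequality, and defines $\mathcal{F}=\E[\gL(\theta^0)-\gL(\theta^T)]$ rather than $\gL(\theta^0)-\gL^\star$.
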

Here, $\mathcal{F}=\E[\gL(\theta^{0})-\gL(\theta^{T})]$, and $d_c$ denotes the parameter dimensionality of the client-side model. The parameter $\lambda$ corresponds to the smoothing parameter of the ZO oracle defined in \eqref{eq:spsa_gradient}, while $\sigma_c^2$ and $\sigma_s^2$ represent the upper bounds on the stochastic gradient variances at the client and server sides, respectively.

The first term, $\frac{4\mathcal{F}}{\eta T}$, characterizes the optimization error due to initialization and decays at a rate of $\gO(1/T)$, which is consistent with the standard convergence behavior of stochastic gradient descent. The second and third terms quantify the error introduced by the variance of the stochastic gradient
estimates on the server and client, respectively. In particular, the server-side variance term decrease as the learning rate $\eta$ decreases, indicating that the effect of stochastic error can be mitigated by using smaller step sizes. On the client side, the variance term exhibits a dependence on the model dimensionality $d_c$, consistent with known observation for ZO optimization in other literature. Importantly, this term decreases linearly with the number of perturbations $Q$, reflecting the variance reduction effect of multi-point gradient estimation. This suggests that allocating fewer model parameters to the client and increasing the number of perturbations can effectively control stochastic error on client side, thereby accelerating convergence.

The final term arises from the bias introduced by the ZO oracle. This term is independent of the learning rate and decreases with a smaller smoothing parameter $\lambda$. Moreover, it also diminishes as the number of perturbations $Q$ increases, since averaging over more perturbations improves the accuracy of the gradient estimator. 

\begin{cor}\label{cor:main_algorithm}
 Based on Theorem \ref{thm:main_algorithm}, let the unified learning rate satisfies $\eta:= \eta_c=\eta_s=\frac{\sqrt{Q}}{ \sqrt{d_c TL}}$; let the smoothing parameter satisfies $\lambda^2 \le\frac{\sqrt{Q}}{\sqrt{d_c^5T L^3}}$. Then we have the following convergence rate:
\begin{align*}
    &\frac{1}{T}\sum_{t=0}^{T}\E[\|\nabla_{\theta}\gL(\theta^t)\|^2]
    \le \frac{4\sqrt{d_c L}}{\sqrt{TQ}}\mathcal{F}+\frac{2\sqrt{LQ}}{\sqrt{d_c T}}\sigma_s^2\\
    &+\frac{4\sqrt{d_cL}\sigma_{c}^2 }{\sqrt{TQ}}+\frac{\sqrt{d_cL}}{\sqrt{TQ}}\numbereq
    \label{eq:col}
\end{align*}
\end{cor}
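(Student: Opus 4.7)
\textbf{Proof plan for Corollary \ref{cor:main_algorithm}.} The plan is to treat this as a direct substitution exercise on top of Theorem~\ref{thm:main_algorithm}. The stated bound contains four terms in the choices $\eta$ and $\lambda^2$, and each term of the theorem is already written in closed form, so the proof reduces to verifying that the proposed step size satisfies the admissibility conditions and then simplifying each term.

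First, I would check the step-size constraints $\eta_s \le \frac{3}{4L}$ and $\eta_c \le \frac{Q}{4Ld_c}$ required by Theorem~\ref{thm:main_algorithm}. With the unified choice $\eta=\eta_c=\eta_s=\frac{\sqrt{Q}}{\sqrt{d_c T L}}$, squaring reduces both inequalities to lower bounds on $T$ (namely $T \ge \tfrac{16LQ}{9 d_c}$ for the server-side constraint and $T \ge \tfrac{16 L d_c}{Q}$ for the client-side constraint). Since Corollary~\ref{cor:main_algorithm} is an asymptotic convergence-rate statement, these hold for all sufficiently large $T$ and can be stated as a mild condition on $T$. I expect this verification to be routine.

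Next, I would substitute term by term into the bound of Theorem~\ref{thm:main_algorithm}. The optimization-error term becomes $\frac{4\mathcal{F}}{\eta T} = \frac{4\mathcal{F}\sqrt{d_c L}}{\sqrt{TQ}}$; the server-variance term becomes $2L\eta\sigma_s^2 = \frac{2\sqrt{LQ}}{\sqrt{d_c T}}\sigma_s^2$; the client-variance term becomes $\frac{4\eta L d_c \sigma_c^2}{Q} = \frac{4\sqrt{d_c L}\sigma_c^2}{\sqrt{TQ}}$. For the bias term produced by the ZO oracle, plugging in the upper bound $\lambda^2 \le \frac{\sqrt{Q}}{\sqrt{d_c^5 T L^3}}$ yields $\frac{L^2 \lambda^2 d_c^3}{Q} \le \frac{L^2 d_c^3}{\sqrt{Q}\,\sqrt{d_c^5 T L^3}} = \frac{\sqrt{d_c L}}{\sqrt{TQ}}$, after collecting powers of $L$ and $d_c$. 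Summing the four expressions reproduces the claimed bound \eqref{eq:col}.

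The main ``obstacle'' here is purely bookkeeping: the exponents of $L$, $d_c$, $T$, and $Q$ need to be tracked carefully so that the $\lambda^2 d_c^3$ bias term aligns with the dominant $\sqrt{d_c L /(TQ)}$ rate, and this exponent balance is precisely what motivates the choice of $\lambda^2$ in the corollary statement. Beyond that, there is no deeper argument — the nontrivial analytic work (smoothing-bias control, variance decomposition across the cut layer, joint client/server descent lemma) is already encapsulated in Theorem~\ref{thm:main_algorithm}, and the corollary simply selects $\eta$ and $\lambda$ to equalize the leading error terms, exposing the final $\mathcal{O}(\sqrt{d_c/(TQ)})$ rate advertised in the introduction.
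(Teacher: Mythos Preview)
Your proposal is correct and follows exactly the same route as the paper: the corollary is proved by direct substitution of the chosen $\eta$ and $\lambda^2$ into the bound of Theorem~\ref{thm:main_algorithm}, and your term-by-term algebra matches the paper's. If anything, you are slightly more careful than the paper, which does not explicitly verify the step-size admissibility conditions you spell out.
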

\begin{rem} Under the parameter choices specified in Corollary~\ref{cor:main_algorithm}, all dominant terms in equation \eqref{eq:col} converge at the rate of $\gO(\sqrt{d_c/TQ})$. This result highlights the role of multi-perturbation ZO estimation in accelerating convergence. In particular, increasing the number of perturbations $Q$ effectively reduces the stochastic error on the client side, thereby facilitating overall convergence performance. Moreover, this result also reveals that the convergence rate can be further improved by putting less portion of the model on the client side, that's being said with the convergence can become faster with the decrease of $d_c$. 

This observation can be attributed to the variance-reduction and bias-reduction properties of multi-point zeroth-order gradient estimators. As established in Theorem~\ref{thm:main_algorithm}, a larger $Q$ mitigates both the stochastic variance and the estimation bias introduced by the ZO optimization. Consequently, the step size can be scaled as $\eta=\gO(\sqrt{Q})$ without compromising stability, leading to an effective linear speedup with respect to $Q$.
In contrast, the second term which captures the accumulation of stochastic error on the server side, converges at a rate of $\gO(Q/d_cT)$. Since this term depends on FO gradient updates at the server, it is independent of the ZO estimation error and therefore does not benefit from increasing $Q$. Notably, because the client-side parameter dimension $d_c$ is typically comparable to or larger than $Q$, this term decays faster than the ZO affected terms and does not dominate the overall convergence rate.
\end{rem}

\begin{rem}
The convergence rate can be further improved by reducing the client-side model dimension $d_c$. This observation provides an important guideline for model-splitting strategies: to achieve better convergence performance, a smaller fraction of the model should be deployed on the client. Intuitively, FO gradient descent is inherently more efficient than ZO gradient descent. Therefore, assigning more computationally complex components to the server leads to faster convergence. Overall, this analysis demonstrates the advantage of the proposed mixed FO–ZO update scheme, which leverages the efficiency of FO optimization on the server while retaining the memory-efficient property of ZO optimization on the client. By combining FO and ZO methods, the algorithm achieves improved convergence speed while effectively accommodating stricter resource constraints on the client side.

\end{rem}

\begin{table}[hb!]\label{table_convergence rate}
    \centering
    \caption{Comparison of convergence rate with FO/ZO SGD paradigm}
    \begin{tabular}{l|c|c|c}
    \toprule
    \textbf{Methods} & \textbf{FO SGD} & \textbf{ZO SGD} &\textbf{{\alg}} \\
    \midrule
    \textbf{Convergence Rate} & $\gO(\sqrt{1/T})$ & $\gO(\sqrt{d/TQ}))$ &$\gO(\sqrt{d_c/TQ})$\\
     \bottomrule
    \end{tabular}
    \label{tab:convergence rate}
\end{table}

\begin{rem}
Table \ref{tab:convergence rate} compares the convergence rate of the proposed mix-optimizer algorithm with two baselines: FO and ZO SGD paradigm. This comparison validates the theoretical advantage of the proposed method. 
Among the three approaches, FO SGD achieves the fastest convergence rate, as it directly exploits exact gradient information for parameter updates. In contrast, ZO SGD relies on noisy gradient estimators constructed from function evaluations, which introduces additional bias and variance. As a result, ZO methods typically require smaller step sizes to ensure stability, leading to slower convergence.

This limitation is reflected in the convergence rate of ZO SGD which exhibits a linear dependence on the model dimension $d$. Although increasing the number of perturbations $Q$ can partially mitigate the stochastic error, the dimensional dependence remains dominant and significantly degrades convergence in high-dimensional settings.

The proposed mixture optimizer achieves a convergence rate that lies between those of FO and ZO SGD. Compared to ZO optimization, the key improvement is the substantially weaker dependence on the model dimension: the convergence rate depends only on the client-side model size $d_c$ rather than full model dimension $d$. Consequently, the convergence performance can be further improved by allocating a smaller portion of the model to the client.

This acceleration stems from the incorporation of FO updates on the server side, where more complex components of the model are optimized efficiently using exact gradients. By offloading computationally intensive updates to the server, the proposed method effectively combines the efficiency of FO optimization while the client side enjoys the benefits of ZOO. Moreover, the algorithm achieves a linear speedup with respect to the number of perturbations $Q$, matching the typical ZO SGD and indicating that our algorithm can achieve the tightest bound under standard assumptions.  
\end{rem}

\section{Experiments}
\subsection{Experiment Setup}
\textbf{Models \& Training Configurations.} 
We evaluated our approach on the OPT family\cite{zhang2022opt} (125M and 1.3B) under both full-parameter and parameter-efficient fine-tuning (e.g., LoRA \cite{hu2022lora}). 
Following the prompt templates from \cite{malladi2023fine}, we benchmarked three optimizer configurations: 
(1) ZO-ZO: both client and server sides use ZO SGD, 
(2) FO-FO: both client and server sides use FO SGD, and
(3) ZO-FO (Ours): the client uses ZO SGD, and the server employs FO SGD. We fix the number of $Q = 10$ perturbation vectors across all experiments. To further investigate the impact of $Q$, we additionally evaluate $Q \in \{1, 5, 10\}$. The results confirm our theoretical insight in Corollary \ref{cor:main_algorithm}, demonstrating that larger $Q$ consistently leads to improved accuracy and accelerated convergence.

\textbf{Tasks \& Hardware.} 
We assessed classification performance on a subset of GLUE \cite{wang2018glue} and SuperGLUE \cite{wang2019superglue} tasks, including SST-2 \cite{socher2013recursive}, CB \cite{de2019commitmentbank}, WIC \cite{pilehvar2019wic}, WSC \cite{kocijan2020review}, BoolQ \cite{clark2019boolq}, and RTE \cite{bowman2015large}. 
All experiments were conducted on an NVIDIA A$100$ GPU with $40$GB memory. Detailed hyperparameter configurations are provided in 
Appendix~VII-A.

\textbf{Model Partitioning.} 
We choose the split layer at $5$ for both OPT-125M and OPT-1.3B, assigning the first five layers to the client and the remaining layers to the server. To further investigate the impact of the split location, we additionally evaluate $k \in \{3, 5, 7\}$ for OPT-1.3B. Detailed configurations and corresponding results are provided in Appendix~VII-C.


\textbf{Memory Measurement.} We report peak GPU memory consumption for each model and dataset configuration. These empirical measurements align with our analytical memory estimates, which account for model parameters, gradients, activations, and perturbation vectors. Detailed derivations are provided in Appendix~VII-B.
\subsection{Experiment Results}

\textbf{{\alg} Reduces Client-Side GPU Memory by up to 3.7$\times$ Compared to FO-FO.} 
As illustrated in Figure~\ref{fig:cgpu_comparison}, our method maintains client GPU memory consumption equivalent to ZO-ZO while achieving significant memory reduction compared to FO-FO. For OPT-125M under full-parameter fine-tuning, our approach reduces CGPU usage by 1.1$\times$ to 3.7$\times$ across tasks, with the largest savings on BoolQ (2.36~GB vs.\ 8.67~GB). For the larger OPT-1.3B model, memory reductions range from 1.2$\times$ to 2.1$\times$, with BoolQ again showing maximum savings (7.23~GB vs.\ 15.41~GB).

\textbf{{\alg} Achieves up to 15.55\% Higher Accuracy Than ZO-ZO.}
Tables~\ref{table:FPT-results} and~\ref{table:LoRA-results} present test accuracy comparisons for full-parameter and LoRA-based fine-tuning, respectively. Our hybrid ZO-FO approach consistently outperforms the ZO-ZO baseline across all tasks and model configurations. For full-parameter fine-tuning, accuracy improvements over ZO-ZO range from 0.69\% on BoolQ (OPT-125M) to 15.55\% on RTE (OPT-1.3B). LoRA-based fine-tuning yields improvements from 0.60\% to 13.16\% over ZO-ZO. 
Compared to the FO-FO baseline, our method incurs only modest accuracy reductions: 0.41\%-4.23\% for full-parameter fine-tuning and 0.20\%-2.05\% for LoRA.


\textbf{{\alg} Achieves the Best Accuracy-Memory Trade-off for LLM Fine-Tuning.} 
The results demonstrate that our method occupies a favorable position in the accuracy-memory trade-off space. By partitioning optimization responsibilities - ZO on the client with FO on the server, {\alg} achieves test accuracy within 0.41\%-4.23\% of FO-FO while reducing client memory to the level of ZO-ZO. This design enables high-performance fine-tuning on resource-constrained edge devices without sacrificing the convergence benefits of gradient-based optimization on the server side. Additional ablation studies over split positions ($k \in {3, 5, 7}$), wall-clock training time, and communication cost (Appendix VII-C, Tables VII – IX) 
 further confirm that these advantages hold consistently across configurations.

\begin{table}[htbp]
\centering
\caption{Test Accuracy and GPU Memory Comparison of OPT Models (\textbf{Full-Parameter} Fine-Tuning)}
\label{table:FPT-results}
\begin{tabular}{c|c|c|ccc}
\toprule
\textbf{Model} & \textbf{Task} & \textbf{Method} & \textbf{Test Acc} & \textbf{CGPU} \tablefootnote{CGPU: Peak GPU memory at the client side} & \textbf{SGPU} \tablefootnote{SGPU: Peak GPU memory at the server side} \\
\midrule
\multirow{18}{*}{OPT-125M} 
 & \multirow{3}{*}{SST-2} & ZO-ZO & 85.78\% & 1.38 GB & 3.15 GB \\
 & & FO-FO & 88.54\% & 2.23 GB & 5.43 GB \\
 & & Ours & 87.61\% & 1.38 GB & 5.43 GB \\
\cmidrule{2-6}
 & \multirow{3}{*}{WIC} & ZO-ZO & 53.92\% & 1.38 GB & 4.67 GB \\
 & & FO-FO & 63.79\% & 2.73 GB & 8.12 GB \\
 & & Ours & 62.36\% & 1.38 GB & 8.12 GB \\
\cmidrule{2-6}
 & \multirow{3}{*}{WSC} & ZO-ZO & 56.73\% & 1.38 GB & 2.18 GB \\
 & & FO-FO & 64.42\% & 1.51 GB & 3.46 GB \\
 & & Ours & 63.46\% & 1.38 GB & 3.46 GB \\
\cmidrule{2-6}
 & \multirow{3}{*}{BoolQ} & ZO-ZO & 60.86\% & 2.36 GB & 9.96 GB \\
 & & FO-FO & 62.11\% & 8.67 GB & 21.76 GB \\
 & & Ours & 61.55\% & 2.36 GB & 21.76 GB \\
\cmidrule{2-6}
 & \multirow{3}{*}{CB} & ZO-ZO & 69.64\% & 1.38 GB & 3.15 GB \\
 & & FO-FO & 87.50\% & 2.08 GB & 5.43 GB \\
 & & Ours & 83.93\% & 1.38 GB & 5.43 GB \\
\cmidrule{2-6}
 & \multirow{3}{*}{RTE} & ZO-ZO & 56.69\% & 1.54 GB & 11.80 GB \\
 & & FO-FO & 66.43\% & 4.3 GB & 13.09 GB \\
 & & Ours & 65.31\% & 1.54 GB & 13.09 GB \\
\midrule
\multirow{18}{*}{OPT-1.3B} 
 & \multirow{3}{*}{SST-2}  & ZO-ZO & 90.03\% & 7.25 GB & 7.83 GB \\
 & & FO-FO & 93.92\% & 8.42 GB & 13.92 GB \\
 & & Ours & 92.32\% & 7.23 GB & 13.81 GB \\
\cmidrule{2-6}
 & \multirow{3}{*}{WIC} & ZO-ZO & 56.25\% & 7.23 GB & 7.26 GB \\
 & & FO-FO & 66.51\% & 8.43 GB & 13.98 GB \\
 & & Ours & 62.28\% & 7.23 GB & 13.65 GB \\
\cmidrule{2-6}
 & \multirow{3}{*}{WSC} & ZO-ZO & 56.88\% & 7.23 GB & 7.27 GB \\
 & & FO-FO & 63.77\% & 8.45 GB & 14.16 GB \\
 & & Ours & 62.49\% & 7.23 GB & 13.67 GB\\
\cmidrule{2-6}
 & \multirow{3}{*}{BoolQ} & ZO-ZO & 61.57\% & 7.23 GB & 11.50 GB \\
 & & FO-FO & 70.61\% & 15.41 GB & 45.98 GB \\
 & & Ours & 70.20\% & 7.23 GB & 44.97 GB \\
\cmidrule{2-6}
 & \multirow{3}{*}{CB} & ZO-ZO & 74.52\% & 7.23 GB & 7.84 GB \\
 & & FO-FO & 89.29\% & 9.24 GB & 20.83 GB \\
 & & Ours & 87.58\% & 7.23 GB & 20.49 GB \\
\cmidrule{2-6}
 & \multirow{3}{*}{RTE} & ZO-ZO & 58.73\% & 7.23 GB & 8.12 GB \\
 & & FO-FO & 75.09\% & 9.38 GB & 20.89 GB \\
 & & Ours & 74.28\% & 7.23 GB & 20.89 GB \\
\bottomrule
\end{tabular}
\end{table}

\begin{table}[htbp]
\centering
\caption{Test Accuracy and GPU Memory Comparison of OPT Models (\textbf{LoRA}-Based Parameter-Efficient Fine-Tuning)}
\label{table:LoRA-results}
\begin{tabular}{c|c|c|ccc}
\toprule
\textbf{Model} & \textbf{Task} & \textbf{Method} & \textbf{Test Acc} & \textbf{CGPU} & \textbf{SGPU} \\
\midrule
\multirow{18}{*}{OPT-125M} 
 & \multirow{3}{*}{SST-2} & ZO-ZO & 85.20\% & 0.84 GB & 3.40 GB \\
 & & FO-FO & 87.54\% & 0.86 GB & 8.20 GB \\
 & & Ours & 87.04\% & 0.84 GB & 8.20 GB \\
\cmidrule{2-6}
 & \multirow{3}{*}{WIC} & ZO-ZO & 55.01\% & 0.85 GB & 4.25 GB \\
 & & FO-FO & 66.14\% & 0.8 GB & 10.80 GB \\
 & & Ours & 65.36\% & 0.84
 GB & 10.80 GB \\
\cmidrule{2-6}
 & \multirow{3}{*}{WSC} & ZO-ZO & 56.73\% & 0.88 GB & 5.70 GB \\
 & & FO-FO & 63.16\% & 0.92 GB & 16.00 GB \\
 & & Ours & 62.50\% & 0.88 GB & 16.00 GB \\
\cmidrule{2-6}
 & \multirow{3}{*}{BoolQ} & ZO-ZO & 61.40\% & 2.36 GB & 9.96 GB \\
 & & FO-FO & 62.40\% & 8.67 GB & 33.50 GB \\
 & & Ours & 62.00\% & 2.36 GB & 33.50 GB \\
\cmidrule{2-6}
 & \multirow{3}{*}{CB} & ZO-ZO & 67.86\% & 0.96 GB & 11.20 GB \\
 & & FO-FO & 82.35\% & 1.04 GB & 34.37 GB \\
 & & Ours & 80.30\% & 0.96 GB & 34.37 GB \\
\cmidrule{2-6}
 & \multirow{3}{*}{RTE} & ZO-ZO & 54.15\% & 0.97 GB & 11.8 GB \\
 & & FO-FO & 68.70\% & 1.04 GB & 34.40 GB \\
 & & Ours & 67.31\% & 0.97 GB & 34.40 GB \\
\midrule
\multirow{18}{*}{OPT-1.3B} 
 & \multirow{3}{*}{SST-2} & ZO-ZO & 89.25\% & 5.78 GB & 6.11 GB \\
 & & FO-FO & 93.00\% & 6.06 GB & 7.51 GB \\
 & & Ours & 92.09\% & 5.78 GB & 7.28 GB \\
\cmidrule{2-6}
 & \multirow{3}{*}{WIC} & ZO-ZO & 54.32\% & 5.80 GB & 6.23 GB \\
 & & FO-FO & 58.15\% & 6.15 GB & 7.98 GB \\
 & & Ours & 56.90\% & 5.80 GB & 7.96 GB \\
\cmidrule{2-6}
 & \multirow{3}{*}{WSC} & ZO-ZO & 55.91\% & 5.90 GB & 6.71 GB \\
 & & FO-FO & 60.58\% & 6.48 GB & 9.69 GB \\
 & & Ours & 58.65\% & 5.90 GB & 9.69 GB\\
\cmidrule{2-6}
 & \multirow{3}{*}{BoolQ} & ZO-ZO & 55.80\% & 6.18 GB & 10.52 GB \\
 & & FO-FO & 61.30\% & 12.86 GB & 36.91 GB \\
 & & Ours & 61.10\% & 6.18 GB & 36.91 GB \\
\cmidrule{2-6}
 & \multirow{3}{*}{CB} & ZO-ZO & 64.59\% & 6.15 GB & 7.85 GB \\
 & & FO-FO & 67.26\% & 7.52 GB & 14.91 GB \\
 & & Ours & 66.17\% & 6.15 GB & 14.85 GB \\
\cmidrule{2-6}
 & \multirow{3}{*}{RTE} & ZO-ZO & 54.51\% & 6.12 GB & 7.73 GB \\
 & & FO-FO & 57.40\% & 7.40 GB & 14.32 GB \\
 & & Ours & 57.04\% & 6.12 GB & 14.30 GB \\
\bottomrule
\end{tabular}
\end{table}
\begin{figure}
    \centering
    \includegraphics[width=1\linewidth]{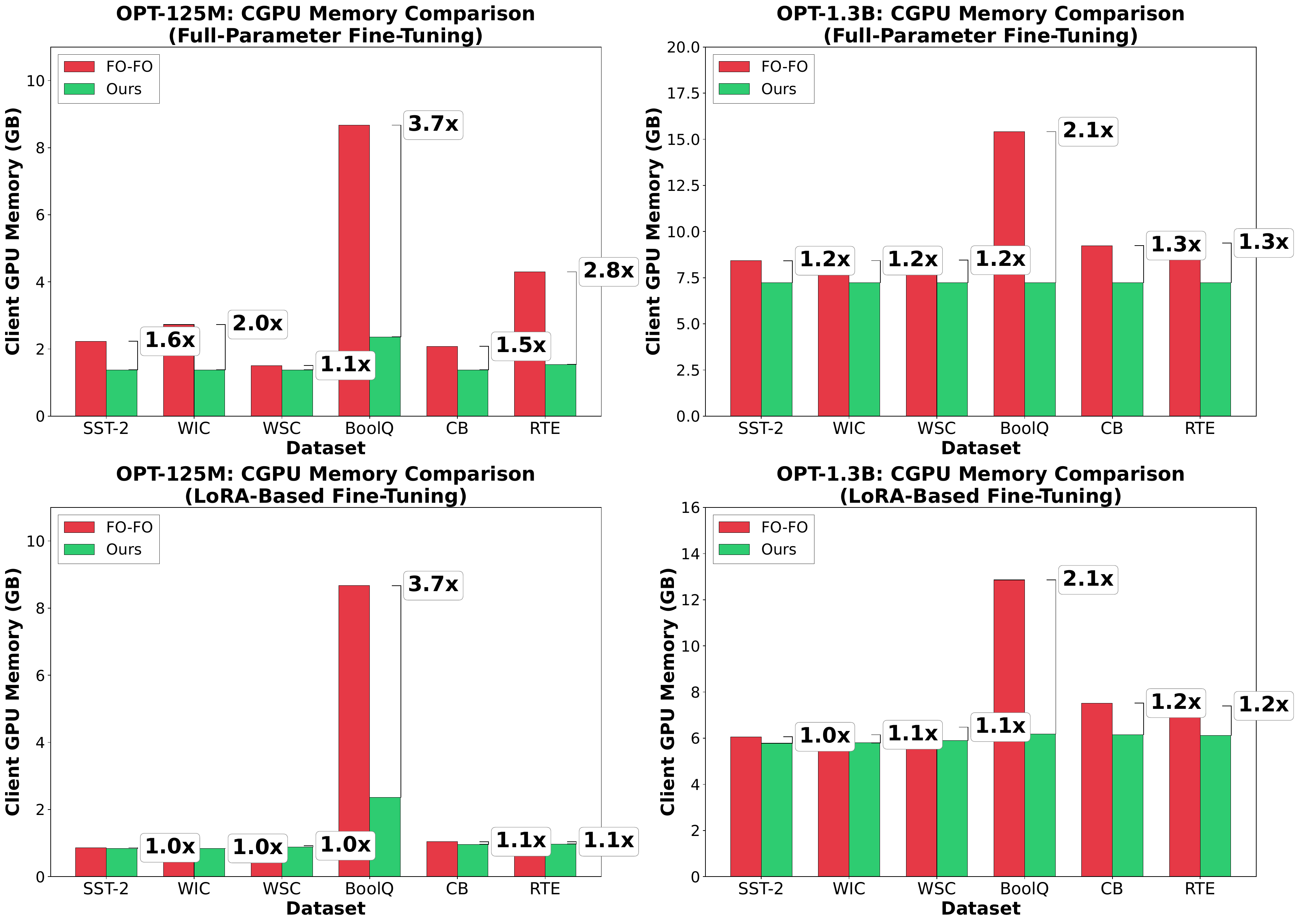}
    \caption{Client GPU (CGPU) Memory Comparison between FO-FO and Our Method for OPT-125M and OPT-1.3B Models}
    \label{fig:cgpu_comparison}
\end{figure}

\section{conclusion}

In this paper, we propose {\alg}, a hybrid-order split learning framework that addresses the memory bottleneck in fine-tuning of LLMs on resource-constrained edge devices. 
By strategically partitioning optimization between ZO method on the client side with FO SGD on the server side, {\alg} achieves significant client-side memory reductions (up to 3.7× compared to the fully FO baseline) while maintaining competitive accuracy within 0.41\%–4.23\% of standard backpropagation-based training.

Our theoretical analysis establishes that {\alg} achieves a convergence rate of $\mathcal{O}(\sqrt{(d_c/TQ)})$, where the rate depends on the client-side parameter dimension $d_c$ rather than the full model dimension $d$. This result demonstrates the benefit of offloading computationally intensive optimization to the server while preserving the memory efficiency of ZO methods on the client. Empirical evaluations across multiple tasks and the OPT model scales validate that {\alg} consistently outperforms the ZO-ZO baseline by up to 15.55\% in accuracy, confirming that server-side FO refinement effectively mitigates the slow convergence typically associated with ZO optimization.

One limitation of {\alg} is the increased number of forward passes required for ZO gradient estimation on the client side, which may increase wall-clock training time despite the memory savings. Additionally, our analysis assumes a single client and does not address data heterogeneity in a federated setting.

\section*{Acknowledgment}
This work is supported in part by RIT CHAI Faculty Seed Grant, NVIDIA Academic Grant Program, NIH awards R16GM159671 and R35GM156653, and NSF grants CNS-2112471 and 2045804. The content is solely the responsibility of the authors and does not necessarily represent the official views of the funding agencies.
\bibliography{reference}

@article{ghadimi2013stochastic,
  title={Stochastic first-and zeroth-order methods for nonconvex stochastic programming},
  author={Ghadimi, Saeed and Lan, Guanghui},
  journal={SIAM journal on optimization},
  volume={23},
  number={4},
  pages={2341--2368},
  year={2013},
  publisher={SIAM}
}

@article{poirot2019split,
  title={Split learning for collaborative deep learning in healthcare},
  author={Poirot, Maarten G and Vepakomma, Praneeth and Chang, Ken and Kalpathy-Cramer, Jayashree and Gupta, Rajiv and Raskar, Ramesh},
  journal={arXiv preprint arXiv:1912.12115},
  year={2019}
}

@article{vepakomma2018split,
  title={Split learning for health: Distributed deep learning without sharing raw patient data},
  author={Vepakomma, Praneeth and Gupta, Otkrist and Swedish, Tristan and Raskar, Ramesh},
  journal={arXiv preprint arXiv:1812.00564},
  year={2018}
}

@article{nesterov2017random,
  title={Random gradient-free minimization of convex functions},
  author={Nesterov, Yurii and Spokoiny, Vladimir},
  journal={Foundations of Computational Mathematics},
  volume={17},
  number={2},
  pages={527--566},
  year={2017},
  publisher={Springer}
}

@inproceedings{
liu2025sparse,
title={Sparse Me{ZO}: Less Parameters for Better Performance in Zeroth-Order {LLM} Fine-Tuning},
author={Yong Liu and Zirui Zhu and Chaoyu Gong and Minhao Cheng and Cho-Jui Hsieh and Yang You},
booktitle={The Thirty-ninth Annual Conference on Neural Information Processing Systems},
year={2025}
}

@InProceedings{pmlr-v235-qin24a,
  title = {Federated Full-Parameter Tuning of Billion-Sized Language Models with Communication Cost under 18 Kilobytes},
  author = {Qin, Zhen and Chen, Daoyuan and Qian, Bingchen and Ding, Bolin and Li, Yaliang and Deng, Shuiguang},
  booktitle = {Proceedings of the 41st International Conference on Machine Learning},
  year = {2024},
  month = {21--27 Jul},
  publisher = {PMLR}
}

@inproceedings{
guo2025zerothorder,
title={Zeroth-Order Fine-Tuning of {LLM}s with Transferable Static Sparsity},
author={Wentao Guo and Jikai Long and Yimeng Zeng and Zirui Liu and Xinyu Yang and Yide Ran and Jacob R. Gardner and Osbert Bastani and Christopher De Sa and Xiaodong Yu and Beidi Chen and Zhaozhuo Xu},
booktitle={The Thirteenth International Conference on Learning Representations},
year={2025}
}

@article{fang2022communication,
  title={Communication-efficient stochastic zeroth-order optimization for federated learning},
  author={Fang, Wenzhi and Yu, Ziyi and Jiang, Yuning and Shi, Yuanming and Jones, Colin N and Zhou, Yong},
  journal={IEEE Transactions on Signal Processing},
  volume={70},
  pages={5058--5073},
  year={2022},
  publisher={IEEE}
}

@article{lin2024splitlora,
  title={Splitlora: A split parameter-efficient fine-tuning framework for large language models},
  author={Lin, Zheng and Hu, Xuanjie and Zhang, Yuxin and Chen, Zhe and Fang, Zihan and Chen, Xianhao and Li, Ang and Vepakomma, Praneeth and Gao, Yue},
  journal={arXiv preprint arXiv:2407.00952},
  year={2024}
}

@article{hu2022lora,
  title={Lora: Low-rank adaptation of large language models.},
  author={Hu, Edward J and Shen, Yelong and Wallis, Phillip and Allen-Zhu, Zeyuan and Li, Yuanzhi and Wang, Shean and Wang, Lu and Chen, Weizhu and others},
  journal={ICLR},
  volume={1},
  number={2},
  pages={3},
  year={2022}
}

@article{wang2019superglue,
  title={Superglue: A stickier benchmark for general-purpose language understanding systems},
  author={Wang, Alex and Pruksachatkun, Yada and Nangia, Nikita and Singh, Amanpreet and Michael, Julian and Hill, Felix and Levy, Omer and Bowman, Samuel},
  journal={Advances in neural information processing systems},
  volume={32},
  year={2019}
}

@article{zhang2022opt,
  title={Opt: Open pre-trained transformer language models},
  author={Zhang, Susan and Roller, Stephen and Goyal, Naman and Artetxe, Mikel and Chen, Moya and Chen, Shuohui and Dewan, Christopher and Diab, Mona and Li, Xian and Lin, Xi Victoria and others},
  journal={arXiv preprint arXiv:2205.01068},
  year={2022}
}

@inproceedings{pilehvar2019wic,
  title={WiC: the word-in-context dataset for evaluating context-sensitive meaning representations},
  author={Pilehvar, Mohammad Taher and Camacho-Collados, Jose},
  booktitle={Proceedings of the 2019 conference of the North American chapter of the association for computational linguistics: Human language technologies, volume 1 (Long and short papers)},
  pages={1267--1273},
  year={2019}
}

@article{kocijan2020review,
  title={A review of winograd schema challenge datasets and approaches},
  author={Kocijan, Vid and Lukasiewicz, Thomas and Davis, Ernest and Marcus, Gary and Morgenstern, Leora},
  journal={arXiv preprint arXiv:2004.13831},
  year={2020}
}

@inproceedings{bowman2015large,
  title={A large annotated corpus for learning natural language inference},
  author={Bowman, Samuel and Angeli, Gabor and Potts, Christopher and Manning, Christopher D},
  booktitle={Proceedings of the 2015 conference on empirical methods in natural language processing},
  pages={632--642},
  year={2015}
}

@article{clark2019boolq,
  title={Boolq: Exploring the surprising difficulty of natural yes/no questions},
  author={Clark, Christopher and Lee, Kenton and Chang, Ming-Wei and Kwiatkowski, Tom and Collins, Michael and Toutanova, Kristina},
  journal={arXiv preprint arXiv:1905.10044},
  year={2019}
}

@inproceedings{de2019commitmentbank,
  title={The commitmentbank: Investigating projection in naturally occurring discourse},
  author={De Marneffe, Marie-Catherine and Simons, Mandy and Tonhauser, Judith},
  booktitle={proceedings of Sinn und Bedeutung},
  volume={23},
  number={2},
  pages={107--124},
  year={2019}
}

@inproceedings{wang2018glue,
  title={GLUE: A multi-task benchmark and analysis platform for natural language understanding},
  author={Wang, Alex and Singh, Amanpreet and Michael, Julian and Hill, Felix and Levy, Omer and Bowman, Samuel},
  booktitle={Proceedings of the 2018 EMNLP workshop BlackboxNLP: Analyzing and interpreting neural networks for NLP},
  pages={353--355},
  year={2018}
}

@inproceedings{socher2013recursive,
  title={Recursive deep models for semantic compositionality over a sentiment treebank},
  author={Socher, Richard and Perelygin, Alex and Wu, Jean and Chuang, Jason and Manning, Christopher D and Ng, Andrew Y and Potts, Christopher},
  booktitle={Proceedings of the 2013 conference on empirical methods in natural language processing},
  pages={1631--1642},
  year={2013}
}

@article{radovivc2025towards,
  title={Towards a unified framework for split learning},
  author={Radovi{\v{c}}, Boris and Canini, Marco and Horv{\'a}th, Samuel and Pejovi{\'c}, Veljko and Vepakomma, Praneeth},
  journal={EuroMLSys' 25},
  pages={183--191},
  year={2025}
}

@inproceedings{thapa2022splitfed,
  title={Splitfed: When federated learning meets split learning},
  author={Thapa, Chandra and Arachchige, Pathum Chamikara Mahawaga and Camtepe, Seyit and Sun, Lichao},
  booktitle={Proceedings of the AAAI conference on artificial intelligence},
  volume={36},
  number={8},
  pages={8485--8493},
  year={2022}
}

@article{lin2024efficient,
  title={Efficient parallel split learning over resource-constrained wireless edge networks},
  author={Lin, Zheng and Zhu, Guangyu and Deng, Yiqin and Chen, Xianhao and Gao, Yue and Huang, Kaibin and Fang, Yuguang},
  journal={IEEE Transactions on Mobile Computing},
  volume={23},
  number={10},
  pages={9224--9239},
  year={2024},
  publisher={IEEE}
}

@inproceedings{gu2025vflair,
  title={VFLAIR-LLM: A Comprehensive Framework and Benchmark for Split Learning of LLMs},
  author={Gu, Zixuan and Fan, Qiufeng and Sun, Long and Liu, Yang and Ye, Xiaojun},
  booktitle={Proceedings of the 31st ACM SIGKDD Conference on Knowledge Discovery and Data Mining V. 2},
  pages={5470--5481},
  year={2025}
}

@article{han2024convergence,
  title={Convergence analysis of split federated learning on heterogeneous data},
  author={Han, Pengchao and Huang, Chao and Tian, Geng and Tang, Ming and Liu, Xin},
  journal={Advances in Neural Information Processing Systems},
  volume={37},
  pages={103476--103544},
  year={2024}
}

@article{li2025reconciling,
  title={Reconciling Hessian-Informed Acceleration and Scalar-Only Communication for Efficient Federated Zeroth-Order Fine-Tuning},
  author={Li, Zhe and Ying, Bicheng and Liu, Zidong and Dong, Chaosheng and Yang, Haibo},
  journal={arXiv preprint arXiv:2506.02370},
  year={2025}
}

@inproceedings{
liang2025towards,
title={Towards Straggler-Resilient Split Federated Learning: An Unbalanced Update Approach},
author={Dandan Liang and Jianing Zhang and Evan Chen and Zhe Li and Rui Li and Haibo Yang},
booktitle={The Thirty-ninth Annual Conference on Neural Information Processing Systems},
year={2025}
}

@inproceedings{
li2025achieving,
title={Achieving Dimension-Free Communication in Federated Learning via Zeroth-Order Optimization},
author={Zhe Li and Bicheng Ying and Zidong Liu and Chaosheng Dong and Haibo Yang},
booktitle={The Thirteenth International Conference on Learning Representations},
year={2025}
}

@article{malladi2023fine,
  title={Fine-tuning language models with just forward passes},
  author={Malladi, Sadhika and Gao, Tianyu and Nichani, Eshaan and Damian, Alex and Lee, Jason D and Chen, Danqi and Arora, Sanjeev},
  journal={Advances in Neural Information Processing Systems},
  volume={36},
  pages={53038--53075},
  year={2023}
}

@article{spall2002multivariate,
  title={Multivariate stochastic approximation using a simultaneous perturbation gradient approximation},
  author={Spall, James C},
  journal={IEEE transactions on automatic control},
  volume={37},
  number={3},
  pages={332--341},
  year={2002},
  publisher={IEEE}
}

@inproceedings{mcmahan2017communication,
  title={Communication-efficient learning of deep networks from decentralized data},
  author={McMahan, Brendan and Moore, Eider and Ramage, Daniel and Hampson, Seth and y Arcas, Blaise Aguera},
  booktitle={Artificial intelligence and statistics},
  pages={1273--1282},
  year={2017},
  organization={PMLR}
}
\bibliographystyle{IEEEtran}

\newpage

\section{Appendix}

\subsection{Experiment Setting}
\label{appendix: exp}
\subsubsection{Hyperparameters}
Table~\ref{tab:experiment_settings} reports the hyperparameters used for split learning experiments. For zeroth-order optimization, we fix $\epsilon = 1\text{e-}3$ and use 10 perturbation vectors across all configurations. Both client and server use SGD without momentum. For LoRA fine-tuning, we use rank $r=8$, $\alpha=16$, and apply adapters to query and value projection matrices. We evaluate validation accuracy every 25 communication rounds across all optimizer configurations.

\begin{table*}[htbp]
\centering
\caption{Experiment Settings for Split Learning with Mixed Optimizers}
\label{tab:experiment_settings}
\small
\begin{tabular}{l|l|cccccc}
\toprule
Model & Parameter & SST-2 & CB & WSC & WIC & RTE & BoolQ \\
\midrule
\multirow{4}{*}{OPT-125M+FP} 
 & ZO LR & 1e-6 & 1e-6 & 1e-6 & 1e-6 & 1e-6 & 2e-7 \\
 & FO LR & 1e-3 & 5e-4 & 5e-4 & 1e-3 & 5e-4 & 1e-3 \\
 & Batch Size & 64 & 16 & 64 & 64 & 16 & 16 \\
 & Rounds & 3000 & 1000 & 1000 & 2000 & 1500 & 1500 \\
\midrule
\multirow{4}{*}{OPT-125M+LoRA} 
 & ZO LR & 5e-5 & 5e-5 & 5e-5 & 5e-5 & 1e-6 & 5e-5 \\
 & FO LR & 1e-4 & 1e-4 & 1e-4 & 1e-4 & 5e-3 & 1e-4 \\
 & Batch Size & 64 & 64 & 64 & 64 & 64 & 64 \\
 & Rounds & 2000 & 1000 & 3000 & 2000 & 1500 & 500 \\
\midrule
\multirow{4}{*}{OPT-1.3B+FP} 
 & ZO LR & 1e-6 & 1e-6 & 1e-6 & 1e-6 & 1e-6 & 1e-6 \\
 & FO LR & 1e-4 & 1e-4 & 1e-4 & 1e-4 & 1e-4 & 1e-4 \\
 & Batch Size & 8 & 8 & 8 & 8 & 8 & 8 \\
 & Rounds & 500 & 500 & 500 & 500 & 500 & 500 \\
\midrule
\multirow{4}{*}{OPT-1.3B+LoRA} 
 & ZO LR & 1e-6 & 1e-6 & 1e-6 & 1e-6 & 1e-6 & 1e-6 \\
 & FO LR & 1e-4 & 1e-4 & 1e-4 & 1e-4 & 1e-4 & 1e-4 \\
 & Batch Size & 8 & 8 & 8 & 8 & 8 & 8 \\
 & Rounds & 500 & 500 & 500 & 500 & 500 & 500 \\
\bottomrule
\end{tabular}
\end{table*}


\subsubsection{Prompt} We adopt the prompt templates from \cite{malladi2023fine} without modification. Table~\ref{tab:seq_lengths} reports tokenized sequence lengths on each dataset. 
\begin{table}[ht]
\centering
\caption{Sequence Lengths by Dataset}
\label{tab:seq_lengths}
\begin{tabular}{lrrr}
\toprule
\textbf{Dataset} & \textbf{Mean} & \textbf{Max} & \textbf{Min} \\
\midrule
SST-2  & 27.0   & 64   & 6  \\
WIC   & 43.0   & 82   & 31 \\
WSC   & 60.2   & 87   & 32 \\
BoolQ & 133.1  & 1049 & 23 \\
CB    & 105.6  & 284  & 42 \\
RTE   & 81.5   & 260  & 31 \\
\bottomrule
\end{tabular}
\end{table}

\subsection{GPU Memory Calculation for Split Learning with OPT-125M}
\label{appendix: memory}

\subsubsection{Variable Definitions} We derive theoretical peak GPU memory for split learning with OPT-125M on SST-2, using sequence length $S=64$ and batch size $B=64$. Table~\ref{tab:hyperparameters} defines all variables. Theoretical estimates are validated against empirical measurements from our implementation.

\begin{table}[htbp]
\centering
\caption{Model and training configuration parameters.}
\label{tab:hyperparameters}
\small
\renewcommand{\arraystretch}{1.3}
\begin{tabular}{@{} c l c @{}}
\toprule
\textbf{Symbol} & \textbf{Name} & \textbf{Value} \\
\midrule
$B$ & Batch Size & 64 \\
$S$ & Sequence Length & 64 \\
$H$ & Hidden Dimension & 768 \\
$L$ & Total Layers & 12 \\
$L_c$ & Client Layers & 5 \\
$L_s$ & Server Layers & 7 \\
$A$ & Attention Heads & 12 \\
$d_h$ & Head Dimension ($H/A$) & 64 \\
$d_{ff}$ & FFN Dimension ($4H$) & 3072 \\
$V$ & Vocabulary Size & 50,272 \\
$M$ & Max Positions & 2048 \\
$\beta$ & Bytes/Element (FP32) & 4 \\
$r$ & LoRA Rank & 8 \\
$\alpha$ & LoRA Alpha & 16 \\
\bottomrule
\end{tabular}
\end{table}

\subsubsection{Per-Layer Parameter Count} 
The following formulas compute the parameter count in a single transformer layer. The attention block has four projection matrices (Q, K, V, O), each $H \times H$, plus biases (Eq.~\ref{pattn}). The FFN has two linear layers ($H \rightarrow d_{ff} \rightarrow H$) with biases (Eq.~\ref{pffn}). Two LayerNorms contribute $4H$ parameters via scale $(\gamma)$ and shift $(\beta)$ vectors. The total per-layer count is given in Eq.~\ref{ptotal}.

\noindent\textit{Q,K,V,O projections with biases:}
\begin{equation}
\label{pattn}
P_{\mathrm{attn}} = 4H^2 + 4H
\end{equation}

\noindent\textit{fc1, fc2 with biases:}
\begin{equation}
\label{pffn}
P_{\mathrm{FFN}} = 2Hd_{ff} + d_{ff} + H
\end{equation}

\noindent\textit{2 LayerNorms with $\gamma, \beta$:}
\begin{equation}
\label{pln}
P_{\mathrm{LN}} = 4H
\end{equation}

\noindent\textit{Total per layer:}
\begin{align}
\label{ptotal}
P_{\mathrm{layer}} &= 4H^2 + 2Hd_{ff} + d_{ff} + 9H \notag \\
&= \mathbf{7{,}087{,}872}
\end{align}

\subsubsection{Client Memory (ZO Optimization)}

\paragraph{Parameters.} Model weights for the client's portion of the split model, including token embeddings, positional embeddings, and transformer layers. OPT uses learned positional embeddings with two additional offset positions for padding and BOS tokens, resulting in $(M+2)$ total position embeddings:

\begin{align}
M_{\mathrm{params}}^{\mathrm{c}} &= \bigl(\underbrace{VH}_{\mathrm{tok}} + \underbrace{(M{+}2)H}_{\mathrm{pos}} + \underbrace{L_c P_{\mathrm{layer}}}_{\mathrm{layers}}\bigr) \beta \notag \\
&= 75{,}622{,}656 \times 4 \notag \\
&= \mathbf{288.5~\mathrm{MB}}
\end{align}

\paragraph{Gradients}
ZO estimates gradients via finite differences, so no explicit gradient storage is needed:
\begin{equation}
M_{\mathrm{grad}}^{\mathrm{c}} = \mathbf{0}
\end{equation}

\paragraph{Causal Attention Masks} Lower-triangular masks for autoregressive attention are pre-allocated for the maximum sequence length $M$:

\begin{align}
M_{\mathrm{mask}}^{\mathrm{c}} &= L_c M^2 \beta \notag \\
&= 5 \times 2048^2 \times 4 = \mathbf{80~\mathrm{MB}}
\end{align}

\paragraph{KV Cache} Although the model operates in inference mode without storing activations for backpropagation, the KV cache is required because ZO optimization performs multiple forward passes per update (one per perturbation vector). Caching key and value tensors avoids redundant recomputation across these perturbations:

\begin{align}
M_{\mathrm{KV}}^{\mathrm{c}} &= L_c \cdot 2 \cdot B \cdot A \cdot S \cdot d_h \cdot \beta \notag \\
&= 5 \times 2 \times 64 \times 12 \times 64^2 \times 4 \notag \\
&= \mathbf{120~\mathrm{MB}}
\end{align}

\paragraph{Stored Activations.}
ZO uses inference mode with no autograd graph:
\begin{equation}
M_{\mathrm{act}}^{\mathrm{c}} = \mathbf{0}
\end{equation}

\paragraph{Transient Memory} Peak temporary allocations during a single layer's forward pass, immediately freed after use:

\begin{align}
M_{\mathrm{trans}}^{\mathrm{c}} &= (\underbrace{BSH}_{\text{layer input}} + \underbrace{BSd_{ff}}_{\text{FFN intermediate}} + \underbrace{BSH}_{\text{layer output}}) \beta \notag \\
&= 18{,}874{,}368 \times 4 = \mathbf{72~\mathrm{MB}}
\end{align}

\paragraph{Communication Buffer} The final hidden state tensor returned from the client forward is held in memory until transmission:

\begin{align}
M_{\mathrm{comm}}^{\mathrm{c}} &= BSH \beta \notag = \mathbf{12.6~\mathrm{MB}}
\end{align}

\paragraph{CUDA Runtime Overhead}
We measure the baseline GPU runtime overhead (context initialization, cuDNN/cuBLAS kernels, memory allocator) via \texttt{nvidia-smi} prior to model loading:
\noindent The total runtime overhead is:
\begin{equation}
M_{\mathrm{CUDA}} \approx 400\text{--}800~\mathrm{MB}
\end{equation}

\paragraph{Client Total.}
\begin{equation}
\boxed{M_{\mathrm{total}}^{\mathrm{c}} \approx \mathbf{1.38~\mathrm{GB}}}
\end{equation}

\subsubsection{Server Memory (FO Optimization)}

\paragraph{Parameters}
Model weights for the server's portion, including remaining transformer layers, the language modeling head projection, and final layer normalization:
\begin{align}
M_{\mathrm{params}}^{\mathrm{s}} &= \bigl(\underbrace{VH}_{\mathrm{LM}} + \underbrace{L_s P_{\mathrm{layer}}}_{\mathrm{layers}} + \underbrace{2H}_{\mathrm{LN}}\bigr) \beta \notag \\
&= 88{,}225{,}536 \times 4 \notag \\
&= \mathbf{352.9~\mathrm{MB}}
\end{align}

\paragraph{Gradients}
FO optimization requires storing gradient tensors for all trainable parameters, matching the size of parameter storage:
\begin{align}
M_{\mathrm{grad}}^{\mathrm{s}} &= P_{\mathrm{server}} \cdot \beta \notag \\
&= M_{\mathrm{params}}^{\mathrm{s}} = \mathbf{352.9~\mathrm{MB}}
\end{align}

\paragraph{Causal Attention Masks}
Pre-allocated lower-triangular masks for autoregressive attention across all server layers, sized for maximum sequence length:
\begin{align}
M_{\mathrm{mask}}^{\mathrm{s}} &= L_s M^2 \beta \notag \\
&= 7 \times 2048^2 \times 4 = \mathbf{112~\mathrm{MB}}
\end{align}

\paragraph{Stored Activations}
Backpropagation requires caching intermediate activations from the forward pass. Per layer, we store: layer inputs, QKV projections (for recomputing attention gradients), attention scores and softmax outputs (for attention backward), and FFN intermediate activations:
\begin{align}
M_{\mathrm{act}}^{\mathrm{layer}} &= \bigl(\underbrace{6BSH}_{\mathrm{in/QKV}} + \underbrace{2BAS^2}_{\mathrm{attn}} + \underbrace{BSd_{ff}}_{\mathrm{FFN}}\bigr) \beta \notag \\
&= 144~\mathrm{MB/layer}
\end{align}
\begin{align}
M_{\mathrm{act}}^{\mathrm{s}} &= L_s \times M_{\mathrm{act}}^{\mathrm{layer}} \notag \\
&= 7 \times 144 = \mathbf{1{,}008~\mathrm{MB}}
\end{align}

\paragraph{Logits Peak} \textcolor{red}{(Dominant)}
The language modeling head produces logits over the full vocabulary for each sequence position. Cross-entropy loss computation requires: the logits tensor, shifted versions for causal alignment, softmax probabilities, and their gradients---all at vocabulary scale $V$:
\begin{align}
M_{\mathrm{logits}}^{\mathrm{s}} &= \bigl(\underbrace{BSV}_{\mathrm{logits}} + \underbrace{3B(S{-}1)V}_{\mathrm{shift/soft/grad}}\bigr) \beta \notag \\
&= 813{,}197{,}824 \times 4 \notag \\
&= \mathbf{3.03~\mathrm{GB}}
\end{align}

\paragraph{Transient Memory}
Peak temporary allocations during a single layer's forward pass, reused across layers:
\begin{align}
M_{\mathrm{trans}}^{\mathrm{s}} &= (\underbrace{BSH}_{\text{layer input}} + \underbrace{BSd_{ff}}_{\text{FFN intermediate}} + \underbrace{BSH}_{\text{layer output}}) \beta \notag \\
&= \mathbf{72~\mathrm{MB}}
\end{align}

\paragraph{CUDA Overhead}
Baseline GPU runtime overhead including CUDA context, cuDNN/cuBLAS kernels, and memory allocator structures:
\begin{equation}
M_{\mathrm{CUDA}}^{\mathrm{s}} \approx \mathbf{500~\mathrm{MB}}
\end{equation}

\paragraph{Server Total}
\begin{equation}
\boxed{M_{\mathrm{total}}^{\mathrm{s}} \approx \mathbf{5.43~\mathrm{GB}}}
\end{equation}

\subsection{Ablation Study}
\label{appendix: abl}
We conduct ablation studies on OPT-1.3B with full-parameter fine-tuning using the hyperparameters listed in Table~\ref{tab:experiment_settings}.

\subsubsection{Effect of Split Layer Position}
\textbf{{\alg}'s accuracy-memory trade-off holds across split configurations.}
Table~\ref{tab:split_ablation} compares split positions $k \in \{3, 5, 7\}$ for OPT-1.3B, where the client holds the first $k$ layers and the server handles the remaining $24-k$ layers. Across all six tasks and all three split points, the accuracy of every method decreases gradually as more layers are placed on the client, consistent with the convergence trend predicted by Corollary~IV.4. On the memory side, {\alg} matches ZO-ZO's client GPU usage at each split point, while FO-FO's client memory grows more steeply with $k$.

\begin{table*}[ht]
        \centering
        \caption{Test Accuracy and GPU Memory Comparison of OPT Models at different Split Positions (\textbf{Full-Parameter} Fine-Tuning)}
        \label{tab:split_ablation}
        \small
        \begin{tabular}{ll ccc ccc ccc}
        \toprule
        & & \multicolumn{3}{c}{\textbf{Split $k{=}3$} (Client: 3L, Server: 21L)} 
        & \multicolumn{3}{c}{\textbf{Split $k{=}5$} (Client: 5L, Server: 19L)} 
        & \multicolumn{3}{c}{\textbf{Split $k{=}7$} (Client: 7L, Server: 17L)} \\
        \cmidrule(lr){3-5} \cmidrule(lr){6-8} \cmidrule(lr){9-11}
        \textbf{Task} & \textbf{Method} 
        & \textbf{Acc} & \textbf{CGPU} & \textbf{SGPU} 
        & \textbf{Acc} & \textbf{CGPU} & \textbf{SGPU} 
        & \textbf{Acc} & \textbf{CGPU} & \textbf{SGPU} \\
        \midrule
        
        \multirow{3}{*}{SST-2}
        & ZO-ZO & 90.2\%    & 6.82 & 8.24
                & 90.03\%           & 7.23           & 7.83
                & 89.8\%     & 7.64 & 7.42 \\
        & FO-FO & 94\%    & 7.54 & 14.80
                & 93.92\%           & 8.42           & 13.92
                & 93.84\%     & 9.30 & 13.04 \\
        & Ours  & 92.4\%    & 6.82 & 14.69
                & 92.32\%           & 7.23           & 13.81
                & 91.71\%     & 7.64 & 12.93 \\
        \midrule
        
        \multirow{3}{*}{WIC}
        & ZO-ZO & 56.38\%    & 6.82 & 7.67
                & 56.25\%           & 7.23           & 7.26
                & 55.67\%     & 7.64 & 6.85 \\
        & FO-FO & 66.9\%    & 7.52 & 14.89
                & 66.51\%           & 8.43           & 13.98
                & 65.45\%     & 9.34 & 13.07 \\
        & Ours  & 62.73\%    & 6.82 & 14.56
                & 62.28\%           & 7.23           & 13.65
                & 61.01\%     & 7.64 & 12.74 \\
        \midrule
        
        \multirow{3}{*}{WSC}
        & ZO-ZO & 57.29\%    & 6.82 & 7.68
                & 56.88\%           & 7.23           & 7.27
                & 54.7\%     & 7.64 & 6.86 \\
        & FO-FO & 64\%    & 7.53 & 15.08
                & 63.77\%           & 8.45           & 14.16
                & 63.01\%     & 9.37 & 13.24 \\
        & Ours  & 62.87\%    & 6.82 & 14.59
                & 62.49\%           & 7.23           & 13.67
                & 61.23\%     & 7.64 & 12.75 \\
        \midrule
        
        \multirow{3}{*}{BoolQ}
        & ZO-ZO & 61.91\%    & 6.82  & 11.91
                & 61.57\% & 7.23   & 11.50
                & 60.28\%     & 7.64  & 11.09 \\
        & FO-FO & 71\%    & 11.81 & 49.58
                & 70.61\%  & 15.41 & 45.98
                & 70.2\%     & 19.01 & 42.38 \\
        & Ours  & 70.8\%    & 6.82  & 48.57
                & 70.20\%           & 7.23            & 44.97
                & 69.1\%     & 7.64  & 41.37 \\
        \midrule
        
        \multirow{3}{*}{CB}
        & ZO-ZO & 74.6\% & 6.82 & 8.25
                & 74.52\%  & 7.23  & 7.84
                & 73.86\%   & 7.64 & 7.43 \\
        & FO-FO & 89.5\%    & 8.04 & 22.03
                & 89.29\%           & 9.24           & 20.83
                & 88.1\%     & 10.44& 19.63 \\
        & Ours  & 87.59\%    & 6.82 & 21.69
                & 87.58\%           & 7.23           & 20.49
                & 86.76\%     & 7.64 & 19.29 \\
        \midrule
        
        \multirow{3}{*}{RTE}
        & ZO-ZO & 59.3\%    & 6.82 & 8.53
                & 58.73\%           & 7.23           & 8.12
                & 58.5\%     & 7.64 & 7.71 \\
        & FO-FO & 75.3\%    & 8.13 & 22.14
                & 75.09\%           & 9.38           & 20.89
                & 74.88\%     & 10.63& 19.64 \\
        & Ours  & 74.4\%    & 6.82 & 21.80
                & 74.28\%           & 7.23           & 20.55
                & 74.03\%     & 7.64 & 19.30 \\
        
        \bottomrule
        \end{tabular}
        \end{table*}

\subsubsection{Training Time}
\textbf{{\alg} incurs moderate time overhead compared to FO-FO while remaining faster than ZO-ZO.}
Table~\ref{tab:training_time} reports the wall-clock training time for each optimizer configuration. All times include evaluation every 25 steps; this overhead is identical across methods. ZO-ZO incurs the highest training time due to the multiple perturbation-based forward passes required on both the client and server, whereas FO-FO is the fastest since both sides perform a single forward and backward pass. {\alg} falls between the two, reflecting the overhead of client-side ZO estimation offset by the efficiency of server-side first-order updates.

\begin{table}[h]
\centering
\caption{Training Time (minutes) for 500 Steps on OPT-1.3B 
with Full-Parameter Fine-Tuning}
\label{tab:training_time}
\begin{tabular}{lccc}
\toprule
\textbf{Task} & \textbf{ZO-ZO (min)} & \textbf{FO-FO (min)} & \textbf{Ours (min)} \\
\midrule
SST2  & 54.82   & 29.23  & 35.29  \\
BoolQ & 218.36  & 160.76 & 203.48 \\
CB    & 77.98   & 36.9  & 63.75  \\
WIC   & 66.09   & 39.29  & 58.51  \\
WSC   & 52.32   & 30.19  & 45.23  \\
RTE   & 96.74   & 50.38  & 86.06  \\
\bottomrule
\end{tabular}
\end{table}

\subsubsection{Communication Cost}
\textbf{{\alg} reduces total communication cost by up to 1.9$\times$ compared to FO-FO.}
Table~\ref{tab:comm_1.3b} reports the total communication volume for each optimizer configuration. In the forward direction (Client$\to$Server), all three methods transmit the same volume, since the client always sends intermediate activations of shape $B \times S \times H$. The key difference lies in the backward direction: under ZO-ZO and {\alg}, the server returns only scalar loss values, resulting in negligible backward communication, whereas FO-FO must transmit the full activation gradient tensor back to the client, roughly doubling the total cost.

\begin{table*}[t]
\centering
\caption{Communication Cost Comparison on OPT-1.3B. (\textbf{Full-Parameter} Fine-Tuning)}
\label{tab:comm_1.3b}
\resizebox{\textwidth}{!}{%
\begin{tabular}{l l r r r r}
\toprule
\textbf{Task} & \textbf{Method} & \textbf{Client${\to}$Server (GB)} & \textbf{Server${\to}$Client (GB)} & \textbf{Total (GB)} & \textbf{Per Round (GB)} \\
\midrule
  \multirow{3}{*}{SST-2}
   & ZO-ZO  &  287.00 & 185.32 &   472.32 & 0.94 \\
   & Ours   &  287.00 & 185.32 &   472.32 & 0.94 \\
   & FO-FO  &  287.00 & 461.36 &   748.36 & 1.50 \\
\cmidrule(lr){2-6}
  \multirow{3}{*}{WIC}
   & ZO-ZO  &  410.46 & 215.84 &   626.29 & 1.25 \\
   & Ours   &  410.46 & 215.84 &   626.29 & 1.25 \\
   & FO-FO  &  410.46 & 613.58 & 1,024.04 & 2.05 \\
\cmidrule(lr){2-6}
  \multirow{3}{*}{WSC}
   & ZO-ZO  &  457.95 &  49.28 &   507.23 & 1.01 \\
   & Ours   &  457.95 &  49.28 &   507.23 & 1.01 \\
   & FO-FO  &  457.95 & 504.26 &   962.21 & 1.92 \\
\cmidrule(lr){2-6}
  \multirow{3}{*}{BoolQ}
   & ZO-ZO  & 1,883.62 & 1,047.99 & 2,931.60 & 5.86 \\
   & Ours   & 1,883.62 & 1,047.99 & 2,931.60 & 5.86 \\
   & FO-FO  & 1,883.62 & 2,870.26 & 4,753.88 & 9.51 \\
\cmidrule(lr){2-6}
  \multirow{3}{*}{CB}
   & ZO-ZO  &  871.90 &  69.77 &   941.67 & 1.88 \\
   & Ours   &  871.90 &  69.77 &   941.67 & 1.88 \\
   & FO-FO  &  871.90 & 937.46 & 1,809.36 & 3.62 \\
\cmidrule(lr){2-6}
  \multirow{3}{*}{RTE}
   & ZO-ZO  &  895.47 &  177.59 & 1,073.07 & 2.15 \\
   & Ours   &  895.47 &  177.59 & 1,073.07 & 2.15 \\
   & FO-FO  &  895.47 & 1,062.55 & 1,958.02 & 3.92 \\
\bottomrule
\end{tabular}%
}
\end{table*}

\subsection{Assumptions}
\begin{assum}[$L$-Smooth]\label{assum:smooth}
    The loss function $\gL$ is $L$-smooth for $\forall \theta_1,\theta_2\in \sR^d$:
    \[\|\ \nabla \gL(\theta_1)- \nabla \gL(\theta_2)\|\le L\|\ \theta_1-\theta_2 \|\ ,\]
    thus
    \[ \gL(\theta_2) \geq \gL(\theta_1) + \langle\nabla \gL(\theta_1),\theta_2-\theta_1\rangle +\frac{L}{2} \|\ \theta_2-\theta_1 \|^2\ . \]
\end{assum}
\begin{assum}[Bounded variance]\label{assum:variance}
    The variance of the stochastic gradients w.r.t. the client and server are upper bounded respectively. For $\forall \theta \in \sR^d$, let $\theta=[\theta_c, \theta_s]$, where $\theta_c\in \sR^{d_c}, \theta_s\in \sR^{d_s}$ represent the parameters on client and server, respectively. Then for any mini-batch $\gB \in \gD$:
    \[
    \|\nabla_{\theta_c} \gL(\theta;\gB)-\nabla_{\theta_c} \gL(\theta)\|^2\le \sigma_{c}^2,
    \]
    \[
    \|\nabla_{\theta_s} \gL(\theta;\gB)-\nabla_{\theta_s} \gL(\theta)\|^2\le \sigma_{s}^2.
    \]
    
\end{assum}

\subsection{Technical Lemmas}
\begin{lem}\label{lem:zero}
Let $g(\theta; \mathcal{B})=\widetilde{\nabla} \mathcal{L}(\theta; \mathcal{B})$ be the SPSA gradient estimator as defined in \eqref{eq:spsa_gradient}. We define the smoothed function $\gL_\lambda(\theta) = \mathbb{E}_v[\gL(\theta + \lambda v)]$, where $v$ is uniformly sampled from the Euclidean ball $\sqrt{d} \mathbb{B}^d = \{\theta \in \mathbb{R}^d \mid \|\theta\| \leq \sqrt{d}\}$. The following properties hold:
\begin{enumerate}
    \item $\gL_\lambda(\theta)$ is differentiable and $\mathbb{E}_{u,\gB}[g(\theta; \mathcal{B})] = \nabla \gL_\lambda(\theta)$.
    \item If $\gL(\theta)$ is $L$-smooth, then we have that
    \begin{align}
        \|\nabla \gL(\theta) - \nabla \gL_\lambda(\theta)\| \leq \frac{L}{2} \lambda d^{3/2},\label{eq:3}
    \end{align}
    
    and
    \begin{align}
    \mathbb{E}_u\|g(\theta; \mathcal{B})\|^2 \leq 2d \cdot \|\nabla \gL(\theta)\|^2 + \frac{L^2}{2} \lambda^2 d^3.
    \end{align}
\end{enumerate}
\end{lem}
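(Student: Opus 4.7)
The plan is to establish the three claims in sequence, treating the two-point SPSA estimator as a stochastic finite-difference surrogate for the gradient of the smoothed function $\gL_\lambda$. For claim~(1), I would first push the gradient through the expectation defining $\gL_\lambda$ via dominated convergence (justified by $L$-smoothness of $\gL$), which simultaneously gives differentiability and the representation $\nabla \gL_\lambda(\theta) = \mathbb{E}_v[\nabla \gL(\theta + \lambda v)]$. Mini-batch unbiasedness, $\mathbb{E}_\gB[\gL(\theta;\gB)]=\gL(\theta)$, then reduces the identity $\mathbb{E}_{u,\gB}[g(\theta;\gB)] = \nabla \gL_\lambda(\theta)$ to the deterministic two-point SPSA identity. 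This is standard: it follows either from direct integration by parts applied to the symmetric form $\mathbb{E}_z\!\left[\tfrac{\gL(\theta+\epsilon z)-\gL(\theta-\epsilon z)}{2\epsilon}\,z\right]$, or from Lemma~2 of Nesterov--Spokoiny for the corresponding smoothing distribution.

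For claim~(2), I would write $\nabla\gL_\lambda(\theta)-\nabla\gL(\theta) = \mathbb{E}_v[\nabla \gL(\theta+\lambda v)-\nabla \gL(\theta)]$, apply Jensen's inequality to move the norm inside, and bound each term by $L\lambda\|v\|$ using $L$-smoothness; a second-order Taylor refinement contributes the factor $\tfrac{L}{2}$, and the moment of the smoothing perturbation supplies the $\lambda d^{3/2}$ dependence. For claim~(3), I would start from the exact identity provided by $L$-smoothness,
\[
\gL(\theta+\epsilon z)-\gL(\theta-\epsilon z) = 2\epsilon\langle\nabla\gL(\theta),z\rangle + R(\theta,z), \qquad |R(\theta,z)|\le L\epsilon^{2}\|z\|^{2}.
\]
Dividing by $2\epsilon$, applying $(a+b)^2\le 2a^2+2b^2$, multiplying by $\|z\|^2$, and taking expectation over $z\sim\mathcal{N}(0,I_d)$ then uses the Gaussian moment identities $\mathbb{E}[\langle g,z\rangle^{2}\|z\|^{2}] = (d+2)\|g\|^{2}\le 2d\|g\|^{2}$ for $d\ge 2$ and $\mathbb{E}[\|z\|^{6}] = d(d+2)(d+4)$, yielding the stated $2d\|\nabla\gL(\theta)\|^{2} + \tfrac{L^{2}}{2}\lambda^{2}d^{3}$ after absorbing sub-leading terms.

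The main delicacy lies in claim~(1): the estimator \eqref{eq:spsa_gradient} uses Gaussian $z\sim\mathcal{N}(0,I_d)$, whereas $\gL_\lambda$ is defined via uniform-ball smoothing $v\sim\mathrm{Unif}(\sqrt{d}\,\mathbb{B}^{d})$. Making the unbiasedness exact requires either re-reading $\gL_\lambda$ as the Gaussian-smoothed version of $\gL$ (so the two-point Gaussian estimator is an exact derivative by integration by parts) or invoking a change-of-variables relating the ball-smoothed gradient to the two-point estimator. Either convention produces the $d^{3/2}$ and $d^{3}$ scalings in claims~(2)--(3) consistent with the standard Nesterov--Spokoiny calculus, so I would proceed under the Gaussian-smoothing interpretation and flag the distributional mismatch as a notational convention.
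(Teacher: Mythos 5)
The paper itself never proves this lemma: it is stated in the appendix as a ``technical lemma'' and used as an imported result from the standard zeroth-order calculus (Nesterov--Spokoiny / Ghadimi--Lan / Liu et al.), so there is no in-paper proof to compare against. Judged on its own merits, your sketch follows the right calculus --- differentiate under the expectation for claim (1), Jensen plus $L$-Lipschitzness of $\nabla\gL$ for claim (2), and the decomposition $\gL(\theta+\epsilon z)-\gL(\theta-\epsilon z)=2\epsilon\langle\nabla\gL(\theta),z\rangle+R$ with $|R|\le L\epsilon^2\|z\|^2$ followed by Young's inequality for claim (3) --- and you correctly identify the one real subtlety, namely that the estimator in the paper's Definition 3 is Gaussian while $\gL_\lambda$ is defined by uniform smoothing over $\sqrt{d}\,\mathbb{B}^d$.

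The gap is that you resolve that mismatch in the wrong direction. Under the Gaussian convention you adopt, the stated constants are not recoverable: your own computation gives $\E\bigl[\langle g,z\rangle^2\|z\|^2\bigr]=(d+2)\|g\|^2$, so after the factor $2$ from $(a+b)^2\le 2a^2+2b^2$ the first term is $2(d+2)\|\nabla\gL(\theta)\|^2$, which exceeds the claimed $2d\|\nabla\gL(\theta)\|^2$; likewise $\E\|z\|^6=d(d+2)(d+4)>d^3$, so the second term cannot be ``absorbed'' into $\tfrac{L^2}{2}\lambda^2 d^3$ without enlarging the constant. The clean constants in the lemma are exactly those of the \emph{sphere} estimator: take $z=\sqrt{d}\,u$ with $u$ uniform on the unit sphere (so $\|z\|^2=d$ deterministically) and $\epsilon=\lambda$. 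Then the divergence-theorem identity $\nabla\gL_\lambda(\theta)=\tfrac{\sqrt{d}}{\lambda}\,\E_u[\gL(\theta+\lambda\sqrt{d}u)\,u]$ makes claim (1) exact (this is also why the statement carries the otherwise-undefined subscript $u$), claim (2) follows from $\|\E_v[\nabla\gL(\theta+\lambda v)-\nabla\gL(\theta)]\|\le L\lambda\,\E\|v\|\le L\lambda\sqrt{d}\le\tfrac{L}{2}\lambda d^{3/2}$ for $d\ge 2$ (no second-order Taylor term is needed; your description conflates two separate routes), and claim (3) becomes $\E\|g\|^2\le 2d^2\,\E[\langle\nabla\gL(\theta),u\rangle^2]+\tfrac{L^2\lambda^2 d^3}{2}=2d\|\nabla\gL(\theta)\|^2+\tfrac{L^2\lambda^2 d^3}{2}$ with equality in the constants, using $\E[\langle g,u\rangle^2]=\|g\|^2/d$. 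So the fix is to commit to the sphere/ball convention rather than the Gaussian one; with that single change your outline closes.
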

\begin{rem}
    By \eqref{eq:3} we immediately have 
    \begin{align}
        \|\nabla \gL_\lambda(\theta)\|^2\le 2\|\nabla \gL(\theta)\|^2 +\frac{L^2}{2} \lambda^2 d^3,\\
        \|\nabla \gL(\theta)\|^2\le 2\|\nabla \gL_\lambda(\theta)\|^2 +\frac{L^2}{2} \lambda^2 d^3.
    \end{align}
\end{rem}
\begin{lem}[Bounds on the variance of Zeroth-order Gradient]Let $G(\theta; \mathcal{B}):=\frac{1}{Q}\sum_{q}g^{(q)}(\theta; \mathcal{B})$ be the multi-perturbation zeroth-order gradient as defined in \eqref{eq:multi_pert_gradient}. Under the same condition as Lemma \ref{lem:zero}, we can further bound the variance of the stochastic Zeroth-order Gradient by true gradient at the beginning of the local iteration and the local update distance.
\begin{align*}
\E\| G_c(\theta^t; \mathcal{B}_t)-\nabla_{\theta_c} \gL_{\lambda}(\theta^t)\|^2&\le\frac{2d_c}{Q}\E[\|\nabla_{\theta_c} \gL(\theta^t)\|^2]
+\frac{2d_c\sigma_{c}^2 }{Q}\\
&+\frac{L^2\lambda^2 d_c^3}{2Q}- \frac{1}{Q}\|\nabla_{\theta_c}\gL_{\lambda}(\theta^t)\|^2\numbereq\label{eq:G_var}
\end{align*}
\end{lem}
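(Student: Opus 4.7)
The plan is to decompose the variance of the multi-perturbation estimator into a single-perturbation variance divided by $Q$, then bound that single-perturbation second moment using the smoothing lemma together with Assumption~\ref{assum:variance}. First, I would observe that the perturbations $\{z^{(q)}\}_{q=1}^Q$ are drawn i.i.d.\ and that the mini-batch $\gB_t$ is fixed within a step, so each term $g_c^{(q)}(\theta^t;\gB_t)$ in $G_c$ is, conditional on $\gB_t$, an i.i.d.\ copy of a single SPSA estimator. By part~1 of Lemma~\ref{lem:zero}, each such copy has conditional mean $\nabla_{\theta_c}\gL_{\lambda}(\theta^t;\gB_t)$ over the perturbation, and after additionally averaging over $\gB_t$ the overall mean is $\nabla_{\theta_c}\gL_{\lambda}(\theta^t)$.

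Next, using the standard identity for i.i.d.\ averages,
\begin{align*}
\E\|G_c(\theta^t;\gB_t)-\nabla_{\theta_c}\gL_{\lambda}(\theta^t)\|^2
= \tfrac{1}{Q}\bigl(\E\|g_c^{(1)}(\theta^t;\gB_t)\|^2-\|\nabla_{\theta_c}\gL_{\lambda}(\theta^t)\|^2\bigr),
\end{align*}
which produces the $1/Q$ factor and isolates the negative term $-\tfrac{1}{Q}\|\nabla_{\theta_c}\gL_{\lambda}(\theta^t)\|^2$ that appears in the claim. To bound $\E\|g_c^{(1)}(\theta^t;\gB_t)\|^2$, I would apply part~2 of Lemma~\ref{lem:zero} to the client-block of coordinates with dimension $d_c$, obtaining
\begin{align*}
\E_{z}\|g_c^{(1)}(\theta^t;\gB_t)\|^2 \le 2d_c\|\nabla_{\theta_c}\gL(\theta^t;\gB_t)\|^2+\tfrac{L^2}{2}\lambda^2 d_c^3.
\end{align*}

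Finally, I would take expectation over the mini-batch $\gB_t$ and invoke Assumption~\ref{assum:variance} in the form $\E\|\nabla_{\theta_c}\gL(\theta^t;\gB_t)\|^2 \le \|\nabla_{\theta_c}\gL(\theta^t)\|^2+\sigma_c^2$, yielding
\begin{align*}
\E\|g_c^{(1)}(\theta^t;\gB_t)\|^2 \le 2d_c\,\E\|\nabla_{\theta_c}\gL(\theta^t)\|^2 + 2d_c\sigma_c^2 + \tfrac{L^2}{2}\lambda^2 d_c^3.
\end{align*}
Substituting back into the i.i.d.\ decomposition gives the four terms in \eqref{eq:G_var} exactly.

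The main obstacle I anticipate is bookkeeping around the order of the expectations: the SPSA estimator's unbiasedness is with respect to the perturbation noise at a \emph{fixed} batch, while Assumption~\ref{assum:variance} is a bound over batches at a fixed $\theta$, so I must be careful to condition first on $\gB_t$, apply the smoothing bound in the $d_c$-dimensional client subspace, and only afterwards peel off the batch randomness. A secondary subtlety is that Lemma~\ref{lem:zero} as stated is for the full parameter vector of dimension $d$, so I would need to verify (or state) that the same bound applies coordinate-wise to the client block so that $d$ can be replaced by $d_c$; this is immediate because the SPSA construction on the client uses perturbations only in the client coordinates, which is what makes the $d_c$ (not $d$) dependence possible in the first place.
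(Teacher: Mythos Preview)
Your proposal is correct and follows essentially the same route as the paper: expand the variance of the i.i.d.\ average to get the $1/Q$ factor and the negative $\|\nabla_{\theta_c}\gL_\lambda\|^2$ term, apply part~2 of Lemma~\ref{lem:zero} with $d$ replaced by $d_c$ to bound $\E\|g_c^{(q)}\|^2$, and then use Assumption~\ref{assum:variance} to pass from $\E\|\nabla_{\theta_c}\gL(\theta^t;\gB_t)\|^2$ to $\E\|\nabla_{\theta_c}\gL(\theta^t)\|^2+\sigma_c^2$. Your explicit attention to the conditioning order (perturbation first, batch second) and to why $d_c$ rather than $d$ appears is, if anything, more careful than the paper's presentation.
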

\begin{proof}\label{lem:var}
First notice that $G_c(\theta^t;\gB_t)=\frac{1}{Q}\sum_{q=1}^{Q}g_{c}^{(q)}(\theta^t;\gB_t)$ and $\E_{u,\gB}[g_{c}^{(q)}(\theta^t;\gB_t)]=\nabla_{\theta_c} \gL_{\lambda}(\theta^t)$.\\
By Lemma \ref{lem:zero}, we have 
\[
\E_u[\|g_{c}^{(q)}(\theta^t;\gB_t)\|^2] \leq 2d_c \cdot \|\nabla_{\theta_c} \gL(\theta^t;\gB_t)\|^2 + \frac{L^2}{2} \lambda^2 d_c^3.
\]
Thus we have
\begin{align*}
    &\E\| G_c(\theta^t; \mathcal{B}_t)-\nabla_{\theta_c} \gL_{\lambda}(\theta^t)\|^2\\
    =&\frac{1}{Q^2}\sum_{p=1}^Q\E\| g_{c}^{(q)}(\theta^t;\gB_t)-\nabla_{\theta_c} \gL_{\lambda}(\theta^t)\|^2\\
    =&\frac{1}{Q^2}\sum_{p=1}^Q\E\| g_{c}^{(q)}(\theta^t;\gB_t)\|^2-\frac{1}{Q}\|\nabla_{\theta_c} \gL_{\lambda}(\theta^t)\|^2\\
    \le&\frac{1}{Q^2}\sum_{p=1}^Q\left[2d_c \E\|\nabla_{\theta_c} \gL(\theta^t;\gB_t)\|^2 + \frac{L^2}{2} \lambda^2 d_c^3\right]\\
    &-\frac{1}{Q}\|\nabla_{\theta_c}\gL_{\lambda}(\theta^t)\|^2 \\
    \le&\frac{1}{Q^2}\sum_{p=1}^Q\left[2d_c (\E \|\nabla_{\theta_c} \gL(\theta^t)\|^2 +\sigma_{c}^2)+ \frac{L^2}{2} \lambda^2 d_c^3\right]\\
    &-\frac{1}{Q}\|\nabla_{\theta_c} \gL_{\lambda}(\theta^t)\|^2\\
    =&\frac{1}{Q}\left[2d_c \E\|\nabla_{\theta_c} \gL(\theta^t)\|^2 +2d_c\sigma_{c}^2+ \frac{L^2}{2} \lambda^2 d_c^3-\|\nabla_{\theta_c} \gL_{\lambda}(\theta^t)\|^2\right] \numbereq\label{eq:7.1.1}
\end{align*}
\end{proof}
\subsection{Proof of algorithm}

\begin{proof}
We now prove the convergence of {\alg} under mixed optimization. The client uses ZO and the server uses FO.
\begin{align*}
    &\E[\gL(\theta^{t+1})-\gL(\theta^{t})]\\
    \le& \underbrace{\E[\langle \nabla_{\theta_s}\gL(\theta^t), \theta_s^{t+1}-\theta_s^t \rangle]}_{\gK_1}+\underbrace{\frac{L}{2}\E[\|\theta_s^{t+1}-\theta_s^{t}\|^2]}_{\gK_2}\\ &+\underbrace{\E[\langle \nabla_{x_{c}}\gL(\theta^t), \theta_{c}^{t+1}-\theta_{c}^t \rangle]}_{\gK_3}+\underbrace{\frac{L}{2}\E[\|\theta_{c}^{t+1}-\theta_{c}^{t}\|^2]}_{\gK_4}\\
\end{align*}

For $\gK_1$:
\begin{align*}
    &\E[\langle \nabla_{\theta_s}\gL(\theta^t), \theta_s^{t+1}-\theta_s^t \rangle]\\
    =&\E[\langle \nabla_{\theta_s}\gL(\theta^t),  -\eta_s G_{s}^{t}(\theta^{t};\gB_{t})\rangle]\\
    =&\E[\langle \nabla_{\theta_s}\gL(\theta^t),  -\eta_s \nabla_{\theta_s} \gL(\theta^{t})\rangle]\\
    =&-\eta_s \|\nabla_{\theta_s}\gL(\theta^t)\|^2
\end{align*}

For $\gK_2$:
\begin{align*}
    &\E[\|\theta_s^{t+1}-\theta_s^{t}\|^2]\\
    =& \eta_s^2\E[\| G_{s}^{t}(\theta^{t};\gB_{t})\|^2]\\
    \le &\eta_s^2(\|\nabla_{\theta_s}\gL(\theta^t)\|^2+\sigma_s^2)\\
\end{align*}
For $\gK_3$:
\begin{align*}
    &\E[\langle \nabla_{x_{c}}\gL(\theta^t), \theta_{c}^{t+1}-\theta_{c}^t \rangle]\\
    =&\E[\langle \nabla_{\theta_c}\gL(\theta^t),  -\eta_c G_{c}^{t}(\theta^{t};\gB_{t})\rangle]\\
    =&\E[\langle \nabla_{\theta_c}\gL(\theta^t),  -\eta_c \left(\nabla_{\theta_c} \gL_{\lambda}^{t}-\nabla_{\theta_c}\gL(\theta^t)+\nabla_{\theta_c}\gL(\theta^t)\right)\rangle]\\
    =&\E[\langle \sqrt{\eta_c}\nabla_{\theta_c}\gL(\theta^t),  -\sqrt{\eta_c}\left( \nabla_{\theta_c} \gL_{\lambda}^{t}- \nabla_{\theta_c}\gL^t\right)\rangle]\\
    &-\eta_c \E[\|\nabla_{\theta_c}\gL(\theta^t)\|^2]\\
    =&\frac{\eta_c}{2}\E\|\nabla_{\theta_c}\gL(\theta^t)\|^2+  \frac{\eta_c}{2}\| \nabla_{\theta_c} \gL_{\lambda}^{t}- \nabla_{\theta_c}\gL^t\|^2\\
    &-\frac{\eta_c}{2}\E\left\| \nabla_{\theta_c} \gL_{\lambda}^{t}\right\|^2-\eta_c \E[\|\nabla_{\theta_c}\gL(\theta^t)\|^2]\\
    \le&-\frac{\eta_c}{2}\E[\|\nabla_{\theta_c}\gL(\theta^t)\|^2]+\frac{\eta_c}{4} L^2\lambda^2 d_c^3-\frac{\eta_c}{2}\E\left\| \nabla_{\theta_c} \gL_{\lambda}^{t}\right\|^2
\end{align*}
For $\gK_4$:
\begin{align*}
    &\E[\|\theta_c^{t+1}-\theta_c^{t}\|^2]\\
    =&\eta_c^2\E\left\| G_{c}^{t}(\theta^{t};\gB_{t})\right\|^2\\
    =&\eta_c^2\E\left\|\nabla_{\theta_c} \gL_{\lambda}^{t}\right\|^2+\eta_c^2\E\left\|G_{c}^{t}-\nabla_{\theta_c} \gL_{\lambda}^{t}\right\|^2\\
\end{align*}
Substituting in \eqref{eq:G_var}, we have

\begin{align*}
&\eta_c^2\E\left\|G_{c}^{t}
- \nabla_{\theta_c}\gL_{\lambda}^{t}\right\|^2\\
    \le&\;\eta_c^2\bigg(
    \frac{2d_c}{Q}\E\|\nabla_{\theta_c} \gL(\theta^t)\|^2\\
    &+\frac{2d_c\sigma_{c}^2 }{Q}
    +\frac{L^2\lambda^2 d_c^3}{2Q} \\
    &- \frac{1}{Q}\E\|\nabla_{\theta_c}
    \gL_{\lambda}(\theta^t)\|^2 \bigg)\\
    \le&\;\eta_c^2\bigg(
    \frac{2d_c}{Q}\E\|\nabla_{\theta_c} \gL(\theta^t)\|^2\\
    &+\frac{2d_c\sigma_{c}^2 }{Q}
    +\frac{L^2\lambda^2 d_c^3}{2Q} \bigg)
\end{align*}
Putting together:
\begin{align*}
    &\E[\gL(\theta^{t+1})-\gL(\theta^{t})]\\
    \le&(\frac{L}{2}\eta_s^2- \eta_s)\E[\|\nabla_{\theta_s}\gL(\theta^t)\|^2]+\frac{L}{2}\eta_s^2\sigma_s^2\\
    -&\frac{\eta_c}{2}\E\|\nabla_{\theta_c}\gL(\theta^t)\|^2+\frac{\eta_c}{4} L^2\lambda^2 d_c^3-\frac{\eta_c}{2}\E\left\| \nabla_{\theta_c} \gL_{\lambda}^{t}\right\|^2\\
    +&\frac{\eta_c^2L}{2}\left(\frac{2d_c}{Q}\E\|\nabla_{\theta_c} \gL(\theta^t)\|^2
    +\frac{2d_c\sigma_{c}^2 }{Q}+\frac{L^2\lambda^2 d_c^3}{2Q} \right)+\frac{\eta_c^2L}{2}\E\left\|\nabla_{\theta_c} \gL_{\lambda}^{t}\right\|^2\\
    \le&(\frac{L}{2}\eta_s^2- \eta_s)\E\|\nabla_{\theta_s}\gL(\theta^t)\|^2+\frac{L}{2}\eta_s^2\sigma_s^2\\
    +&(\frac{\eta_c^2Ld_c}{Q}-\frac{ \eta_c}{2})\E\|\nabla_{\theta_c} \gL(\theta^t)\|^2+\frac{\eta_c^2 L d_c\sigma_{c}^2 }{Q}+\frac{\eta_c L^2\lambda^2 d_c^3}{4Q}\\
    \le &- \frac{\eta_s}{4}\E[\|\nabla_{\theta_s}\gL(\theta^t)\|^2]+\frac{L}{2}\eta_s^2\sigma_s^2 \\
    -&\frac{\eta_c}{4}\E[\|\nabla_{\theta_c} \gL(\theta^t)\|^2]+\frac{\eta_c^2 L d_c\sigma_{c}^2 }{Q}+\frac{\eta_c L^2\lambda^2 d_c^3}{4Q},
\end{align*}
where in the second step we let $\eta_c\ge \frac{1}{L}$, and in the final step we let $\eta_s\le\frac{3}{4L}$ and $\eta_c\le\frac{Q}{4Ld_c}$.

To combine the squared norm of the server gradient $\E[\|\nabla_{\theta_s}\gL\|^2]$ and client gradient $\E[\|\nabla_{\theta_c}\gL\|^2]$, we define the universal step size $\eta:=\eta_s=\eta_c$. Rearranging the terms, we have
\begin{align*}
    &\frac{\eta }{4}\E\|\nabla_{\theta}\gL(\theta^t)\|^2\\
    \le& \E[\gL(\theta^{t})-\gL(\theta^{t+1})]+\frac{L}{2}\eta^2\sigma_s^2+\frac{\eta^2 L d_c\sigma_{c}^2 }{Q}+\frac{\eta L^2\lambda^2 d_c^3}{4Q}
\end{align*}
Taking average from $t=0$ to $T-1$ at both sides:
\begin{align*}
    &\frac{1}{T}\sum_{t=0}^{T}\frac{\eta}{4}\E\|\nabla_{\theta}\gL(\theta^t)\|^2\\
    \le& \frac{1}{T}\E[\gL(\theta^{0})-\gL(\theta^{T})]+\frac{L}{2}\eta^2\sigma_s^2+\frac{\eta^2 L d_c\sigma_{c}^2 }{Q}+\frac{\eta L^2\lambda^2 d_c^3}{4Q}\\
    &\frac{1}{T}\sum_{t=0}^{T}\E[\|\nabla_{\theta}\gL(\theta^t)\|^2]\\
    \le& \frac{4}{\eta T}\E[\gL(\theta^{0})-\gL(\theta^{T})]+2L\eta\sigma_s^2+\frac{4\eta L d_c\sigma_{c}^2 }{Q}+\frac{L^2\lambda^2 d_c^3}{Q}
\end{align*}
Let $\eta=\frac{\sqrt{Q}}{ \sqrt{d_c TL}}$, and take $\lambda^2\le\frac{\sqrt{Q}}{\sqrt{d_c^5T L^3}}$. \\
Thus, we have
\begin{align*}
    &\frac{1}{T}\sum_{t=0}^{T}\E[\|\nabla_{\theta}\gL(\theta^t)\|^2]\\
    \le& \frac{4\sqrt{d_c L}}{\sqrt{TQ}}\E[\gL(\theta^{0})-\gL(\theta^{T})]+\frac{2\sqrt{LQ}}{\sqrt{d_c T}}\sigma_s^2+\frac{4\sqrt{d_cL}\sigma_{c}^2 }{\sqrt{TQ}}+\frac{\sqrt{d_cL}}{\sqrt{TQ}}\\
\end{align*}

\end{proof}

\end{document}